\definecolor{mygreen}{RGB}{126,172,85}
\definecolor{myblue}{RGB}{78,113,190}
\definecolor{myorange}{RGB}{184,96,41}
\begin{document}

\title{GORAG: Graph-based Online Retrieval Augmented Generation for Dynamic Few-shot Social Media Text Classification}
\settopmatter{authorsperrow=4,printacmref=true}
\author{Yubo Wang}
\affiliation{%
	\institution{The Hong Kong University of Science and Technology}
	\city{Hong Kong SAR}
	\country{China}
}
\email{ywangnx@connect.ust.hk}

\author{Haoyang Li}
\authornote{Corresponding author}
\affiliation{%
	\institution{The Hong Kong Polytechnic University}
	\city{Hong Kong SAR}
	\country{China}
}
\email{haoyang-comp.li@polyu.edu.hk}

\author{Fei Teng}
\affiliation{%
	\institution{The Hong Kong University of Science and Technology}
	\city{Hong Kong SAR}
	\country{China}
}
\email{fteng@connect.ust.hk}

\author{Lei Chen}
\affiliation{%
	\institution{The Hong Kong University of Science and Technology (Guangzhou)}
	\city{Guangzhou}
	\country{China}
}
\affiliation{%
	\institution{Guangzhou HKUST Fok Ying Tung Research Institute}
	\city{Guangzhou}
	\country{China}
}
\email{leichen@hkust-gz.edu.cn}

\renewcommand{\shortauthors}{Yubo Wang, Haoyang Li, Fei Teng, and Lei Chen}

\begin{abstract}
Text classification is vital for Web for Good applications like hate speech and misinformation detection. However, traditional models (e.g., BERT) often fail in dynamic few-shot settings where labeled data are scarce, and target labels frequently evolve. While Large Language Models (LLMs) show promise in few-shot settings, their performance is often hindered by increased input size in dynamic evolving scenarios. To address these issues, we propose \textbf{GORAG}, a \textbf{G}raph-based \textbf{O}nline \textbf{R}etrieval-\textbf{A}ugmented \textbf{G}eneration framework for dynamic few-shot text classification. GORAG constructs and maintains a weighted graph of keywords and text labels, representing their correlations as edges. To model these correlations, GORAG employs an edge weighting mechanism to prioritize the importance and reliability of extracted information and dynamically retrieves relevant context using a tailored minimum-cost spanning tree for each input. Empirical evaluations show GORAG outperforms existing approaches by providing more comprehensive and precise contextual information. 
Our code is released at: \url{https://github.com/Wyb0627/GORAG}.
\end{abstract}

\begin{CCSXML}
	<ccs2012>
	<concept>
	<concept_id>10002951.10003260.10003282.10003292</concept_id>
	<concept_desc>Information systems~Social networks</concept_desc>
	<concept_significance>500</concept_significance>
	</concept>
	<concept>
	<concept_id>10002951.10003317.10003347.10003356</concept_id>
	<concept_desc>Information systems~Clustering and classification</concept_desc>
	<concept_significance>500</concept_significance>
	</concept>
	<concept>
	<concept_id>10002951.10003317.10003338</concept_id>
	<concept_desc>Information systems~Retrieval models and ranking</concept_desc>
	<concept_significance>300</concept_significance>
	</concept>
	</ccs2012>
\end{CCSXML}

\ccsdesc[500]{Information systems~Social networks}
\ccsdesc[500]{Information systems~Clustering and classification}
\ccsdesc[300]{Information systems~Retrieval models and ranking}

\keywords{Large Language Model, Retrieval-Augmented Generation, Few-shot Text Classification, Hate Speech Detection, Social Media}


\maketitle
\vfill\eject

\section{Introduction}
\begin{figure}[t]
	\centering
	\includegraphics[width=1\linewidth]{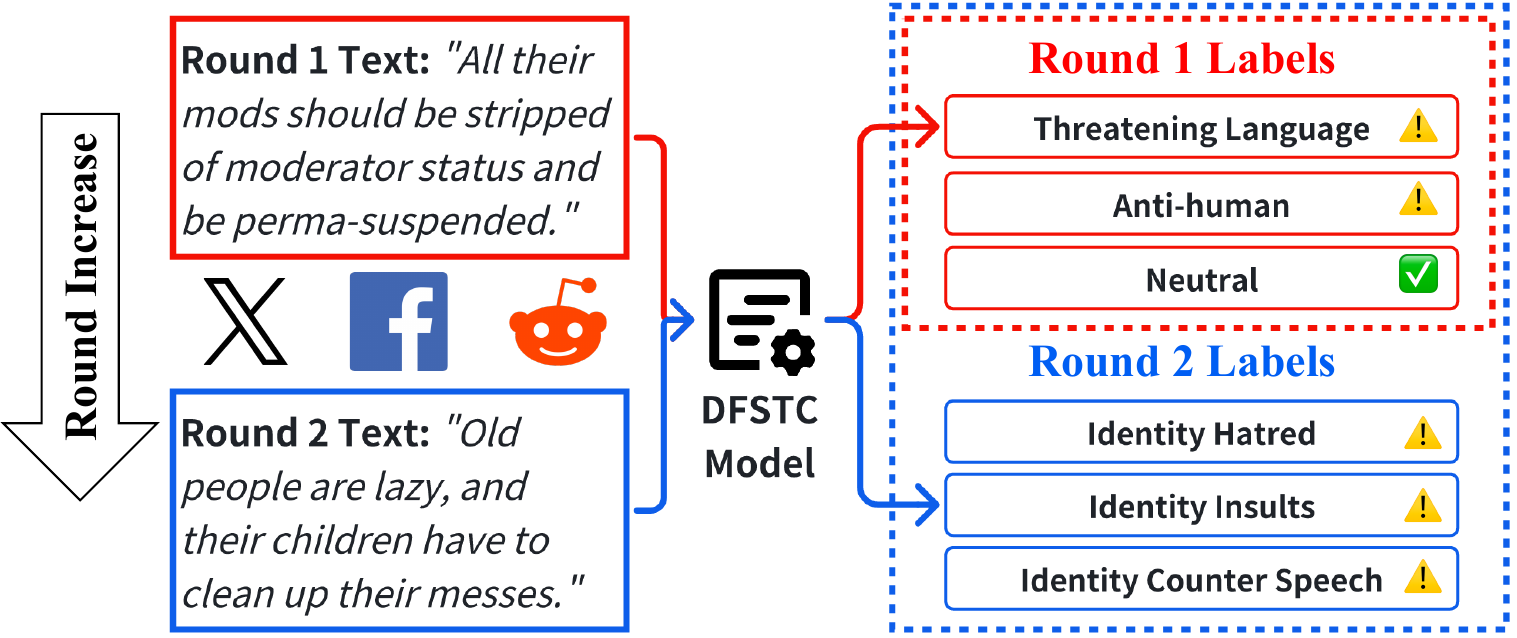}
	\vspace{-18px}
	\caption{An example of DFSTC task with 2 rounds. 
		In Round 1, the text is classified as \textit{Threatening Language}.
		In Round 2, another text is classified as a new label: \textit{Identity Insults}.
	}
	\label{fig:dfstc_task}
	\vspace{-15px}
\end{figure}

Text content classification is a fundamental task for various social media applications, 
such as hate speech detection
\cite{geissler2025analyzing,hee2024brinjal,chen2024hate,djuric2015hate,subramanyam2024triage,mandal2024contrastive,cao2024modularized,ji2024capalign,guo2024bayesian},
misinformation/disinformation detection 
\cite{luo2024message,nesterov2024contentious,liu2025modality,toughrai2025fake,cinus2025exposing,ashkinaze2024dynamics,chen2022cross,shang2024mmadapt,ye2024susceptibility}
, and suicide risk detection
\cite{kang2024scenedapr,li2025dynaprotect,naseem2025digri,Ge_2025}. 
In recent years, Neural Network (NN)-based models~\cite{johnson2015semi,liu2016recurrent, dieng2017topicrnn,wang2017bilateral,johnson2017deep,peters-etal-2018-deep,wang2018sentiment,yang2019xlnet}, such BERT~\cite{devlin2018bert} and RoBERTa \cite{liu2019roberta},  have demonstrated impressive performance on social media text content classification tasks. 
However, the effectiveness of these NN-based approaches \cite{johnson2015semi,liu2016recurrent, dieng2017topicrnn,wang2017bilateral,johnson2017deep,peters-etal-2018-deep,wang2018sentiment,yang2019xlnet} depends on abundant labeled training data, which requires significant time and human efforts~\cite{meng2018weakly,nguyen2024human}. 
{Furthermore, in real-world applications, such as 
Abuse Detection \cite{vidgen2021introducing}, new kinds of social media abuse can frequently appear with the changing needs of user analytics and progress in social research development, 
introducing the need to classify texts into new, extra labels
\cite{xia-etal-2021-incremental}, leading to the \textbf{D}ynamic \textbf{F}ew-shot \textbf{S}ocial media \textbf{T}ext \textbf{C}lassification (DFSTC) task. }
As shown in Figure~\ref{fig:dfstc_task}, the DFSTC task starts with an initial set of classes (e.g., \textit{Threatening Language}). As new classes like \textit{Identity Insults} emerge in later rounds, it becomes challenging to obtain extensive labeled data for these new categories.
 Models must therefore adapt to such changes and accurately classify new examples with limited labeled data. 
Developing methods to dynamically classify text with limited labeled data remains a crucial and open problem.

Depending on the technique for 
the DFSTC task,
current models can be mainly categorized into two types, i.e.,
data augmentation-based models and RAG-based models.
Firstly,
data augmentation-based models~\cite{meng2018weakly,meng2020text,xia-etal-2021-incremental} create additional data for fine-tuning small language models (e.g., RoBERTa \cite{liu2019roberta}), by mixing the pairs of the few-shot labeled data and assigning mixed labels indicating the validity for these created data based on the labels of each data pairs~\cite{xia-etal-2021-incremental}.
However, due to the limited labeled data, 
and the label names may not always be available;
furthermore, the additional data created can have very limited patterns. 
Consequently, the classification models trained with these created data may over-fit and are not generalizable~\cite{liu2023pre,liu2024liberating,lei2023tart}.

Recently, Large Language Models (LLMs) \cite{sun2021ernie, jiang2023mistral, touvron2023llama, achiam2023gpt, yang2024qwen2} have excelled in social media text classification through their superior understanding abilities. However, they often struggle with task-specific knowledge \cite{gao2023retrieval,zhang2023siren} and lack accurate reasoning clues to explain responses \cite{peng2024graph,wang2024kglink,dong2025understand,qian2025memorag}. In domains like hate speech or misinformation detection, these flaws can lead to undetected harmful content or wrongful penalties. Consequently, Long Context RAG models \cite{chen2023dense, sarthi2024raptor,xia-etal-2021-incremental,yoon2024compact} were proposed to integrate retrieved side information (e.g., label descriptions) as external clues for more accurate classification.
However, the incorporation of side information can further increase the input size \cite{chen2023dense}, which often overwhelms critical details with noises. As the key clues of hate speech or misinformation are usually short sentences within long contexts, the ignorance of fine-grained, entity-level information can degrade LLM classification performance. Although proprietary LLMs \cite{hurst2024gpt} can handle such extended inputs, they inevitably suffer from high token costs.

To achieve finer-grained RAG, and better preserve the entity-level information, Graph-based RAG approaches~\cite{edge2024local,guo2024lightrag,gutierrez2024hipporag,peng2024graph,zhang2025survey,han2024retrieval,gutiérrez2025ragmemory}, such as GraphRAG~\cite{edge2024local} and LightRAG~\cite{guo2024lightrag}, 
	propose to index the unstructured texts into the structured graphs. 
	These approaches extract entities and relations from texts as graph nodes and edges, and their retrievals are based on the graph components mapped from the query text.
	 However, existing Graph-based RAG approaches~\cite{edge2024local,guo2024lightrag,gutierrez2024hipporag} still have three issues in the DFSTC task.

\begin{enumerate}[leftmargin=10pt]
	\item 
	\textbf{Uniform-indexing
		issue:}
	Some of these approaches construct graphs by indexing extracted text chunks uniformly.
	They do not consider the varying importance of each text and extraction confidence of keyword entity.
		For example, in \autoref{fig:dfstc_task}, keyword \textit{lazy} is related to all round 2 labels with different importance. Hence, 
	uniform-indexing may provide inaccurate graph context for LLM classification.

	\item \textbf{Non-adaptive Retrieval issue:} These approaches select relevant information for each input text based on a globally predefined threshold. 
	However, the optimal retrieval threshold can vary across different query text and different rounds; hence, this non-adaptive retrieval approach can be suboptimal.

	\item	\textbf{Narrow-source issue:} These approaches only retrieve side information from a limited number of few-shot training text and ignore the important information among query texts. 
	Since the new keywords related to certain labels in hate speech or misinformation detection emerge frequently in social media, this issue consequently limits models' retrieval comprehensiveness.

\end{enumerate}

To address these issues, we propose a novel 
\textbf{G}raph-based \textbf{O}nline \textbf{R}etrieval \textbf{A}ugmented \textbf{G}eneration framework for dynamic few-shot social media text classification, called GORAG.
In general, GORAG constructs and maintains an adaptive online weighted graph by extracting side information from all target texts, and tailoring the graph retrieval for each input.

Firstly, GORAG constructs a weighted graph with keywords extracted from labeled texts as keyword nodes, and text labels as label nodes, to help the model better capture graph topology \cite{yang2024effective,zhao2024pre,lin2024psmc}. To model relationships, GORAG employs an edge-weighting mechanism to assign different weights to edges, representing keywords' importance and relevance to the respective text's label. Secondly, GORAG constructs a minimum-cost spanning tree based on query keywords and retrieves the label nodes within it as candidate labels. Since this MST is determined solely by the constructed graph and the keywords of the query text, GORAG achieves adaptive retrieval without human-defined thresholds, ensuring precision for each input. Finally, GORAG applies an online indexing mechanism to enrich the graph with keywords from query texts. By calculating weights that reflect the semantic relevance of keywords to labels, GORAG effectively reduces the potential noise introduced by inaccurate LLM classification during real-time updates, ensuring the graph remains a robust and comprehensive retrieval source

We summarize our novel contributions as follows.
\begin{itemize}[leftmargin=10pt]
\item 
We present an
online RAG framework for DFSTC tasks, namely GORAG, which consists of three steps: Graph Construction, Candidate Label Retrieval, and Text Classification, aiming to address the Uniform-indexing, Non-adaptive Retrieval, and Narrow-source issues.


\item We develop a novel graph edge weighting mechanism based on the text keywords' importance within the text corpus, 
which enables our approach to effectively model the relevance between keywords and labels, 
thereby helping the more accurate retrieval.

\item To achieve adaptive retrieval, 
we formulate the candidate label retrieval problem which is akin to the NP-hard Steiner Tree problem, 
and 
provide an efficient and effective solution based on the greedy algorithm.

\item  Extensive experiments from the social media contextual abuse benchmark and the web news misinformation detection benchmark demonstrate the effectiveness and efficiency of GORAG.



\end{itemize} 
 
 \section{Preliminary and Related Works}\label{sec:pre_rel}
	We first introduce the preliminaries of Dynamic Few-shot Social media Text Classification (DFSTC) in Section~\ref{sec:dfstc} and then discuss the
		related works for DFSTC task
 in Section~\ref{sec:related_work}. 
	The important notations used in this paper are listed in \autoref{tab:notation} in \autoref{sec:notations}.



\subsection{DFSTC Task} \label{sec:dfstc}
Text classification~\cite{kowsari2019text, minaee2021deep, li2022survey} is a key task in real-world web for good applications, including social media hate speech detection \cite{geissler2025analyzing,hee2024brinjal,chen2024hate,djuric2015hate,subramanyam2024triage,mandal2024contrastive,cao2024modularized,ji2024capalign,guo2024bayesian},
misinformation/disinformation detection 
\cite{luo2024message,nesterov2024contentious,liu2025modality,toughrai2025fake,cinus2025exposing,ashkinaze2024dynamics,chen2022cross,shang2024mmadapt,ye2024susceptibility}.
It involves assigning predefined labels $y \in \mathcal{Y}$
to text $t=(w_1,\cdots,w_{|t|})$ based on its words $w_i$. 
Traditional approaches require fine-tuning with large labeled datasets~\cite{devlin2018bert,liu2019roberta}, which may not always be available.
Recently, few-shot learning addresses this limitation by enabling models to classify social media texts into various classes with only a small number of labeled examples per class~\cite{meng2018weakly,meng2020text}.
On the other hand, dynamic classification~\cite{parisi2019continual, hanmo2024effective, wang2024comprehensive} introduces an additional challenge where new classes are introduced over multiple rounds, requiring the model to adapt to new classes while retaining knowledge of previously seen ones, as frequently involve extensive human labeling for these new classes can be difficult. 
	For example, in \autoref{fig:dfstc_task}, new and more fine-grained classes of hate speech, such as \textit{Identity Insults}, can serve as complements to categories that already exists. 

Combining these aspects, Dynamic Few-Shot Social media Text Classification (DFSTC) task allows the model to handle evolving classification tasks with minimal labeled data, and is better suited for real world scenarios~\cite{xia-etal-2021-incremental}.
In DFSTC task, the model is provided with multiple rounds of new class updates.
Specifically, in each round $r$, a set of new labels $\mathcal{Y}^r_{new}$ is introduced. The complete candidate label set for round $r$, denoted as $\mathcal{Y}^r$, is the union of all new labels from rounds $i = 1$ to $r$: $\mathcal{Y}^r = \bigcup_{i=1}^r \mathcal{Y}^i_{new}$.
 In round $r$, the labeled training set for class labels in $\mathcal{Y}^r$ 
 is denoted as $\mathcal{D}^r=\cup_{y_i \in \mathcal{Y}^r}\{t_j,y_i\}_{j=1}^k$, 
 where per class $y_i \in \mathcal{Y}^r$ only has $k$ labeled examples $\{t_j,y_i\}_{j=1}^k$.
Formally, at the $r$-th round, given the labels in $\mathcal{Y}^r$ and all labeled data $\mathcal{D}^r=\cup_{i=1}^r\mathcal{D}^i$,
the target of DFSTC task is to learn a function $f^r_\theta$, which can learn scores for all target labels $\mathbf{s}^r={f}^r_\theta(t_u,\mathcal{Y}^r)\in[0,1]^{|\mathcal{Y}^r|}$ for the unseen text $t_u$.
Then, we can get the predicted label $y^{r*}\in \mathcal{Y}^r$ for the unseen text $t_u$ as follows.  
\begin{equation}
	y^{r*}=\arg\max_{y^r \in \cup_{i=1}^r \mathcal{Y}^i}({f}_\theta^r(t_u, \cup_{i=1}^r \mathcal{Y}^i)[y^r])
\end{equation}

DFSTC is valuable for its ability to learn from limited labeled data, adapt to evolving class distributions, 
and address real-world scenarios where classes and data evolve over time.

\subsection{DFSTC Models}\label{sec:related_work}
Current Dynamic Few-Shot Social media Text Classification (DFSTC) models can be broadly categorized into 
NN-based models and RAG-based models, RAG-based models can be further categorized into Long Context RAG, and Graph-based RAG models. 

\subsubsection{\textbf{NN-based models}}
NN-based models \cite{meng2018weakly,meng2020text,xia-etal-2021-incremental} generate additional data contrastively based on the few-shot labeled data and the text formed label names to train the NN-based classifier. 
However, due to the limited labeled data, the generated text data of these models can have very limited patterns, which makes them prone to overfitting \cite{liu2023pre,liu2024liberating,lei2023tart}. 
In web for good applications such as hate speech detection, different kinds of hate speech contents are usually a minority compared with neutral contents, this highly unbalanced data distribution can further overfit the NN-based models and affect their performance \cite{liu2023pre}.

\begin{figure*}[ht]
	\centering
	\includegraphics[width=1\linewidth]{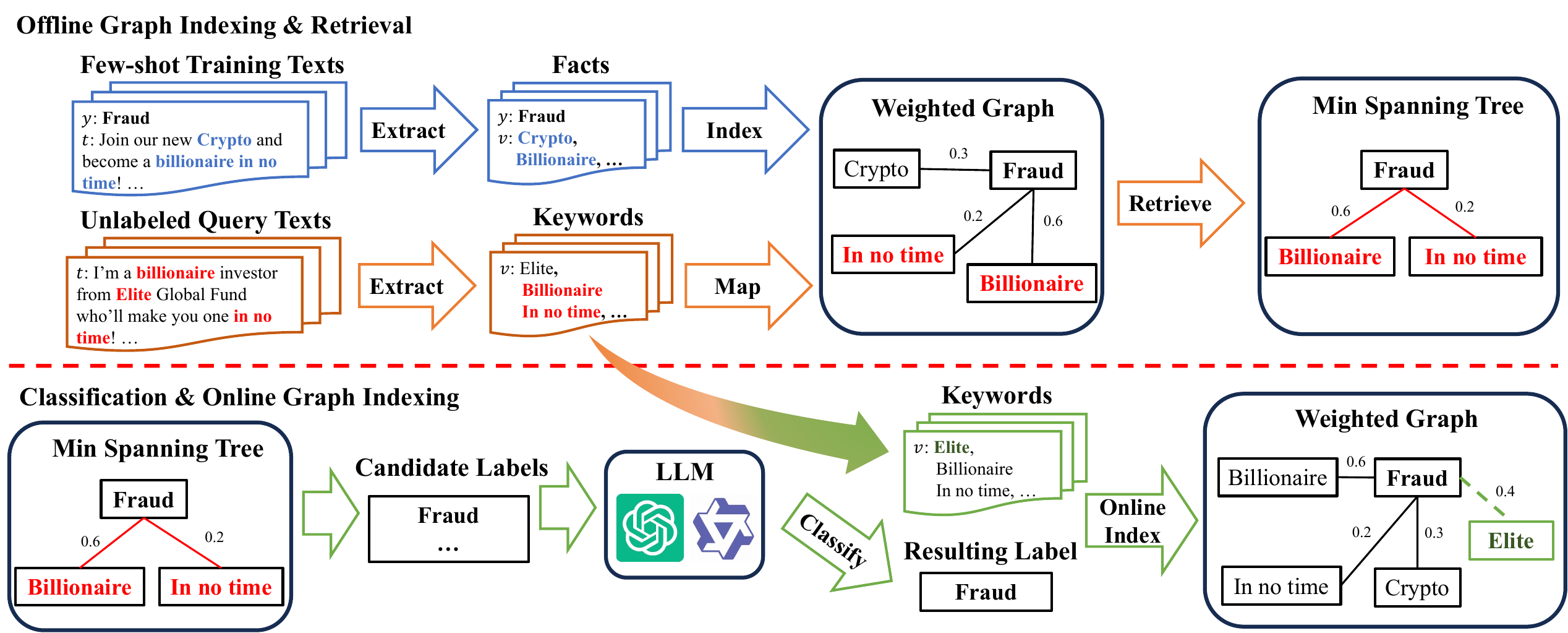}
	\vspace{-20px}
	\caption{An overview of GORAG in each round. 
	In {\color{myblue}Part 1}, GORAG constructs a weighted graph based on keywords extracted from the few-shot training data.  
	In {\color{myorange}Part 2}, GORAG performs adaptive graph retrieval and outputs the candidate labels, which is a subset of the original target label set. 
	In {\color{mygreen}Part 3}, GORAG first classifies texts into one of the candidate labels, 
	and then applies online graph indexing to update the graph with query text extracted entities that not yet existed in the graph.}
	\label{fig:Gorag}
		\vspace{-10px}
\end{figure*}
\subsubsection{\textbf{Long Context RAG models}}
Recently, Large Language Model (LLM)-based models have undergone rapid development 
\cite{zhangJLWC24,snyder2024early,zhong2024logparser,yan2024efficient,chen2024large,ding2024enhancing}.
Notably, LLMs are inherently capable of inference without fine-tuning \cite{sun2021ernie, jiang2023mistral, touvron2023llama, achiam2023gpt, yang2024qwen2}, making them originally suitable for few-shot tasks. However, the lack of fine-tuning can lead LLMs to generate unfaithful responses, as they lack task-specific knowledge, these incorrect answers are often referred to as hallucinations \cite{zhang2023siren}. 
In web for good applications such as hate speech detection, hallucinations can lead to failure to detect harmful responses, making these harmful contents being spread through internet. 
To mitigate hallucinations, researchers 
try to retrieve text documents outside of LLM as side information to help the LLM inference, namely Long Context RAG models
 \cite{chen2023dense,sarthi2024raptor,hu2024prompt,wu2024coral,cai2024forag,oosterhuis2024reliable,yadav2024extreme,che2024hierarchical,pan-etal-2024-llmlingua,jiang-etal-2024-longllmlingua}. 
Among them, prompt compressor models \cite{pan-etal-2024-llmlingua,jiang-etal-2024-longllmlingua}, 
apply LLM's generation perplexity on to filter out un-important tokens in the model input. 

\subsubsection{\textbf{Graph-based RAG models}}
However, the retrieved contents of Long Context RAG models remain unstructured and lengthy, making the important fine-grained entity-level information be overwhelmed by noisy contexts \cite{edge2024local,guo2024lightrag}, which still impedes the effectiveness or efficiency of LLM classification, 
leading to lost-in-the-middle issues \cite{liu2024lost} for cheaper open-sourced LLMs, or high token cost for proprietary LLMs \cite{hurst2024gpt}. 
To address these issues, Graph-based RAG models were proposed
\cite{peng2024graph,zhang2024graph,wu2024medical,zhang2024knowgpt,guo2024lightrag,gutierrez2024hipporag,gutiérrez2025ragmemory},
 these models
convert text chunks from the side information document as graph nodes, and the relation between these text chunks as graph edges. These models then retrieve graph nodes and edges as the LLM classification context, better preserving more fine-grained, entity-level information.
However, current Graph-based RAG models usually consider each graph edge uniformly and ignore the varying confidence and importance of relations between text chunks. Also, as the optimal retrieval threshold can vary across different data samples, their globally fixed threshold selected by humans may be suboptimal for the entire dataset. 

\section{Methodology}\label{sec:methdology}
\subsection{Framework Overview}

	To address the aforementioned Uniform-indexing, Non-adaptive Retrieval, and Narrow-source issues,
we propose GORAG, a novel approach that achieves adaptive retrieval by extracting valuable side information from a minimum-cost spanning tree generated on the constructed graph. 
As shown in \autoref{fig:Gorag}, GORAG consists of three core parts, i.e., \textcolor{myblue}{\textbf{Offline Graph Indexing}}, \textcolor{myorange}{\textbf{Graph Retrieval}}, and \textcolor{mygreen}{\textbf{Classification \& Online Graph Indexing}}.

\noindent\textbf{Part 1: Offline Graph Indexing. } 
It targets to construct the weighted graph $\mathcal{G}^r_{new}(\mathcal{V}^r_{new}, \mathcal{E}^r_{new},\mathcal{W}^r_{new})$ based on the labeled data $\mathcal{D}^r=\cup_{y_i \in \mathcal{Y}^r}\{t_j,y_i\}_{j=1}^k$ 
	at the $r$-th round. 
	An example weighted graph created is shown in \autoref{fig:Gorag}, this graph will be used to provide retrieved-augmented information as contexts for query texts, 
	enabling LLMs to better understand query texts and accurately classify them.
	Specifically, with the new labeled data $\mathcal{D}^r$ at the $r$-th round, we first extract keywords $\mathcal{K}^r \in \{t_j\}_{j=1}^{|\mathcal{D}^r|}$.
	Then, we will assign an edge $e^r_{v,y} \in \mathcal{E}^r_{new}$ between each keyword  $v \in \mathcal{K}^r$ and their respective text's label $y\in \mathcal{Y}^r$, {\color{black}where $\mathcal{Y}^r = \bigcup_{i=1}^r \mathcal{Y}^i_{new}$ denotes the union label set 
		of all new labels $\mathcal{Y}^i_{new}$ from round $i\in\{1, 2, \dots, r\}$.}
	We then compute the weight $w^r_{v,y} \in \mathcal{W}^r_{new}$ for each edge $e^r_{v,y} \in \mathcal{E}^r_{new}$ based on the keyword's importance within the text corpus and keyword relatedness to the label $y$. 
	At last, we merge all graphs $\mathcal{G}^r_{new}(\mathcal{V}^r_{new}, \mathcal{E}^r_{new},\mathcal{W}^r_{new})$ at each round $r$ as the full graph $\mathcal{G}^r(\mathcal{V}^r, \mathcal{E}^r,\mathcal{W}^r)$.
	More details can refer to Section~\ref{sec:gc}.

	\noindent\textbf{Part 2: Graph Retrieval.} 
	The Graph Retrieval process maps the extracted keyword nodes $\mathcal{V}^t$ from the query text $t$ to the constructed weighted graph $\mathcal{G}^r$ 
	and generates a minimum-cost spanning tree on $\mathcal{G}^r$ that includes all mapped keywords in $\mathcal{V}^t$. 
	From this minimum-cost spanning tree (MST), the candidate label set $\hat{\mathcal{Y}}_t^r$ at the current round $r$ is obtained, 
	which is a reduced subset of the original target label set $\mathcal{Y}^r$ at the current round $r$. 
	As this MST solely depends on the graph itself and the text extracted keywords, we can eliminate any human-defined thresholds and achieve adaptive retrieval. 
	For more details, please  refer to Section~\ref{sec:gr}.


	\noindent\textbf{Part 3: Classification \& Online Graph Indexing.} 
	In the Classification part, we create LLM input with the query text $t$, the candidate labels $\hat{\mathcal{Y}}_t^r$ retrieved from the weighted graph $\mathcal{G}^r$, and descriptions $\mathcal{K}_{y_i}$ associated with each candidate label $y_i \in \hat{\mathcal{Y}}_t^r$ if 
	available. 
	The LLM will carry out classification with this input. 
	After classification, the online graph indexing procedure dynamically adds keywords $\mathcal{V}_{notexist}^t$, which are extracted from the query text $t$ but are not in the existing graph $\mathcal{G}^r$, as new graph nodes to further enrich the graph. 
	We then 
	assign edge weights based on these newly added keywords' importance within the text corpus and their relatedness to the predicted label $y^*_t$ for the query text $t$.
	By penalizing the edges between nodes and their semantically irrelevant labels, GORAG can reduce the error introduced by inaccurate classification.
	The Online Graph Indexing procedure improves the model classification ability by updating the graph in real-time with new keywords. 
	For more details, please refer to Section~\ref{sec:oi}.
	Due to space limitations, we put the time and space complexity analysis of GORAG to \autoref{sec:complexity}.
	\vspace{-5px}

\subsection{Part 1: Offline Graph Indexing} \label{sec:gc}
In this subsection, we introduce the graph construction procedure of GORAG, which includes a novel edge-weighting mechanism to address the uniform indexing issue. 
The pseudo code of graph construction is shown in Algorithm \autoref{alg:graph_construction} in \autoref{sec:pseduo_code}.

For each round $r$,
given its respective labeled training texts $\mathcal{D}^r=\cup_{y_i \in \mathcal{Y}^r}\{t_j,y_i\}_{j=1}^k$, 
GORAG first extracts text keywords $\mathcal{K}^r$ from 
$\mathcal{D}^r$. 
Specifically, it
calculates the normalized TFIDF score \cite{tfidf} as the correlation score $CS(v,t)$ for texts' each word and phrase with less than 3 words \cite{wang2025agrag}. 
\vspace{-3px}
\begin{equation}\label{eq:tfidf_ke}
	\vspace{-4px}
	CS(v,t)=\frac{count(v,t)}{|t|log(|\mathcal{T}^r|+1)}\times log\frac{|\mathcal{T}^r|+1}{|t_j:v\in t_j, \ t_j \in \mathcal{T}^r|+1}, 
\end{equation}
where $ \mathcal{T}^r=\cup_{i=1}^r \mathcal{T}^i$ denotes all training and query texts seen so far, 
$count(v, t)$ is the number of times
that the term $v$ appears in the text $t$, 
and $|t_j : v \in t_j, \ t_j \in \mathcal{T}^r|$ denotes the number of texts in the corpus $\mathcal{T}^r$ that contain the keyword $v$. 
Here, we will extract words or phrases with $CS(v,t)>\tau$ as our keyword. 
We set $\tau=0.5$ following the keyword extraction setting in \cite{wang2025agrag}.
These keywords serve as graph nodes, and we denote them as keyword nodes.

Also, GORAG incorporates new labels $\mathcal{Y}^r_{new}=\mathcal{Y}^r\setminus\mathcal{Y}^{r-1}$ at the $r$-th round as graph nodes, denoted as label nodes, where $\mathcal{Y}^r$ is the total label set at round $r$. Hence, the graph node set $\mathcal{V}^r_{new}$ at the $r$-th round can be obtained as follows. 
\begin{equation}
	\vspace{-5px}
	\mathcal{V}_{new}^r=\mathcal{K}^r \cup\mathcal{Y}^r_{new}.
\end{equation}
Then, GORAG adds edges between each keyword node $v \in \mathcal{K}^r$ to its corresponding label node $y \in \mathcal{Y}^r$, indicating that the keyword $v$ appears in texts associated with the label $y$.

Next, 
considering the keywords $\mathcal{K}^r$ from the labeled text are not uniformly related to the text's label in $\mathcal{Y}^r$, 
we apply an edge weighting mechanism to assign a weight $w_{v,y}$ to each keyword-label edge between any two nodes $v$ and $y$.
This weight can reflect the correlation between keywords and each label.
Firstly, 
we apply the correlation score $CS(v,t)$ assigned by Equation~\eqref{eq:tfidf_ke} to measure the importance and relatedness of a particular keyword $v \in \mathcal{K}^r$ w.r.t. 
the text $t$ 
it extracted from.

As the keyword $v$ can be extracted from multiple texts with different labels in different rounds, the final edge weight $w_{v,y}^r$ of edge $e_{v,y}$ 
at the $r$-th round is calculated as an average of all weights from all seen texts with label $y$:
\vspace{-5px}
\begin{equation}\label{eq:edge_weigting}
	w_{v,y}^r=\frac{\sum_{t_j \in \mathcal{T}^{r,v,y} }1-CS(v,t_j)}{
|\mathcal{T}^{r,v,y} |
	}, 
\end{equation}
where $\mathcal{T}^{r,v,y}  = \{t_j|   v\in t_j \land t_j \in \mathcal{T}^{r,y}\}$ is the text that contains keyword $v$ and labeled $y$. 
We denote the generated weighted graph for $\mathcal{K}_{new}^r$ and $\mathcal{Y}_{new}^r$ at the $r$-th round as $\mathcal{G}^r_{new}(\mathcal{V}_{new}^r,\mathcal{E}_{new}^r, \mathcal{W}_{new}^r)$, 
where $\mathcal{V}^r_{new}$, $\mathcal{E}^r_{new}$, and $\mathcal{W}^r_{new}$ denotes the node, edge and edge weight set respectively.
Lastly, $\mathcal{G}^r_{new}$ will be merged into the graph from previous rounds $\mathcal{G}^{r-1}$ to form the full graph $\mathcal{G}^r$ at round $r$: 
\begin{align}
	&\mathcal{G}^r(\mathcal{V}^r,\mathcal{E}^r,\mathcal{W}^r),\\ \mathcal{V}^{r}=\mathcal{V}^{r}_{new}\cup \mathcal{V}^{r-1}, &\ \mathcal{E}^{r}=\mathcal{E}^{r}_{new}\cup \mathcal{E}^{r-1},\  \mathcal{W}^{r}=\mathcal{W}^{r}_{new}\cup \mathcal{W}^{r-1}.\notag 
\end{align}
Particularly, we define $\mathcal{G}^0=\emptyset$ and $\ r\geq 1$. 
Also, to guarantee the graph connectivity of the resulting graph, 
we add edges between every new label node $y_{n}\in \mathcal{Y}_{new}^r$, 
and each old label node $y_{o}\in \mathcal{Y}^{r-1}$, 
the edge weight $w_{y_n,y_o}^r$ of edge $e_{y_n,y_o}$ at round $r$ is defined as the weighted average of all edge weights that link keyword nodes with label node $y_n$ or $y_o$: 
\vspace{-3px}
\begin{equation}
	\vspace{-5px}
	\mathcal{M}_y^r=\{v\ |\ v\in \mathcal{N}^r(y)\ \land\ v\notin \mathcal{Y}^r\}
	\vspace{-5px}
\end{equation}

\begin{equation}
	\vspace{-3px}
	w_{y_n,y_o}^r=\frac{1}{2}\Big(\frac{\sum_{v\in \mathcal{M}_{y_n}^r}w_{v,y_n}^r}{2\times|\mathcal{M}_{y_n}^r|}+\frac{\sum_{v\in \mathcal{M}_{y_o}^r}w_{v,y_o}^r}{2\times|\mathcal{M}_{y_o}^r|}\Big),
\end{equation}

\noindent where $\mathcal{N}^r(y)$ denote the neighbor nodes of label node $y$ in $\mathcal{G}^r$.

After merge, the graph $\mathcal{G}^r$ would be used for future retrieval, and be further updated by GORAG's online indexing mechanism.

\subsection{Part 2: Graph Retrieval}\label{sec:gr}

	In this subsection, we introduce the graph retrieval procedure of GORAG. With the graph constructed in Part 1, GORAG can adaptively retrieve a set of candidate class labels with keywords extracted from query texts without any human-defined thresholds, addressing the threshold dependent issue.

To begin with, GORAG extracts keywords $\mathcal{V}_{test}^t$ for each query text $t\in \mathcal{T}_{test}$ in the same manner with Equation~\eqref{eq:tfidf_ke}, then, $\mathcal{V}_{test}^t$ would be split into two subsets: 
\begin{equation}
	\mathcal{V}_{exist}^t \cup \mathcal{V}_{notexist}^t  =\mathcal{V}_{test}^t,\ \mathcal{V}_{exist}^t \cap \mathcal{V}_{notexist}^t  = \emptyset, 
\end{equation}
where $\mathcal{V}_{exist}^t $ and $\mathcal{V}_{notexist}^t$ denotes the keywords in $\mathcal{V}_{test}^t$ that already exist and not yet exist in $\mathcal{G}^r$ at current round $r$ respectively. Later, $\mathcal{V}_{exist}^t $ would be applied for achieving adaptive retrieval, and $\mathcal{V}_{notexist}^t$ would be applied for online indexing to further enrich the constructed graph (Further illustrated in Section~\ref{sec:oi}).

To achieve the adaptive retrieval, GORAG tries to find the minimum cost spanning tree (MST) that contain all keyword nodes within $\mathcal{V}_{exist}^t$, 
and then retrieves all label nodes within the generated MST as candidate labels.
The intuition behind this approach is that an MST spans the entire graph to cover all given nodes with the smallest possible spanning cost. Consequently, label nodes within the generated MST can be considered important for demonstrating the features of the given keyword node set. As the generation of an MST is a classical combinatorial optimization problem with an optimal solution determined solely by the set of given nodes. 
By generating the MST and retrieving all label nodes within it, we eliminate the need for any human-defined thresholds, achieving adaptive retrieval.

\begin{table*}[ht!]
	\centering   
	\caption{Statistics of the tested datasets, where we divide their original labels into multiple rounds. We achieve a balanced testing set on the IFS-Rel datasets by assigning each label 40 testing samples.}
	\label{tab:data_stat}
	\vspace{-10px}
	\resizebox{\linewidth}{!}{%
		\begin{tabular}{l|c|c|c|c|c|c|c|c|c|c} 
			\hline\hline
			\multirow{2}{*}{\textbf{Dataset}} & \multirow{2}{*}{\textbf{Avg. Text Token}} & \multicolumn{2}{c|}{\textbf{R1}} & \multicolumn{2}{c|}{\textbf{R2}} & \multicolumn{2}{c|}{\textbf{R3}} & \multicolumn{2}{c|}{\textbf{R4}} & \textbf{Total}  \\ 
			\cline{3-11}
			&                                           & Testing data & Label \#          & Testing data & Label \#          & Testing data & Label \#          & Testing data & Label \#          & Testing data    \\ 
			\hline
			CAD \cite{vidgen2021introducing}      & 154                                       & 240        & 4                & 60        & 3                & 286        & 3                & 231        & 3                & 817           \\ \hline
			WOS \cite{kowsari2017hdltex}      & 200                                       & 2,417        & 32                & 2,842        & 53                & 2,251        & 30                & 1,886        & 18                & 9,396           \\ 
			\hline
			IFS-Rel \cite{xia-etal-2021-incremental}      & 105                                       & 640        & 16               & 640        & 16                & 640        & 16                & 640        & 16                & 2,560           \\ 
			\hline\hline
		\end{tabular}
	}
	\vspace{-10px}
\end{table*}
\begin{definition} [Adaptive Candidate Label Generation Problem]  
	Given an undirected weighted connected graph $\mathcal{G}^r(\mathcal{V}^r,\mathcal{E}^r,\mathcal{W}^r)$, a set of keywords $\mathcal{V}_{exist}^t$ extracted from text $t$ that can be mapped to nodes in $\mathcal{V}^r$, and the target label set $\mathcal{Y}^r\in \mathcal{V}^r$ at the $r$-th round, our target is to find a set of labels nodes ${\mathcal{Y}}^r_t \in {\mathcal{Y}}^r$. 
	Firstly, we identify a subgraph $\mathcal{G}^r_t(\mathcal{V}^r_t, \mathcal{E}^r_t, \mathcal{W}^r_t)$ of $\mathcal{G}^r$ by minimizing the edge weight sum as follows.
	\vspace{-10px}
	\begin{align}
	&	\min_{e^r_{u.v} \in \mathcal{E}^r_t}w^r_{u,v} \notag\\
			s.t. \  &v \in \mathcal{V}^r_t, \forall v \in \mathcal{V}_{exist}^t 
	\end{align}
Then, since the subgraph node  $\mathcal{V}^r_t$ set contains both keyword nodes and label nodes,
we define the candidate label set $\hat{\mathcal{Y}}^r_t \in \mathcal{V}^r_t$ as the label nodes in the generated sub-graph for the text $t$ at round $r$.
\end{definition}
\vspace{-5px}
 Due to space limitations, we put the proof of the Adaptive Candidate Label Generation Problem to be NP-hard to \autoref{sec:proof}. 

As the problem is NP-hard, it is infeasible to obtain the optimal result in polynomial time.
Therefore, 
to solve this problem, we propose a greedy algorithm based on the Melhorn's algorithm \cite{mehlhorn1988faster}. Our algorithm generates the candidate label set $\hat{\mathcal{Y}}^r_t \subseteq \mathcal{Y}^r$ for text $t$ at the $r$-th round.
The pseudo code 
of our algorithm is shown 
in Algorithm \autoref{alg:mehlhorn} in \autoref{sec:pseduo_code}
, following the Melhorn's algorithm, our algorithm can also achieve 2-Approximate.

Firstly, we calculate the shortest path between each keyword node $v$ in $\mathcal{V}_{exist}^t$ to all other nodes in $\mathcal{G}^r$ by calculating the minimum spanning tree $MST$ of $\mathcal{G}^r$ 
Here, the keyword nodes $v$ in $\mathcal{V}_{exist}^t$ serve as terminal nodes that determines the final generated candidate labels w.r.t. to the weighted graph $\mathcal{G}^r$.
Secondly, we create a new auxiliary graph $H$ where the edges represent the shortest paths between the closest terminal nodes. 
Thirdly, we construct the minimum spanning tree $MST'$ of the auxiliary graph $H$ 
, and then add the shortest paths between each two nodes in $MST'$ to the Steiner Tree $ST$.
Lastly our candidate labels $\hat{\mathcal{Y}}^r_t$ for text $t$ are calculated as the intersection of $ST$ and all target labels $\mathcal{Y}^r$ at round $r$: $ST\cap\mathcal{Y}^r$.
\subsection{Part 3: Classification \& Online Indexing}\label{sec:oi}
In this subsection, we first introduce GORAG's classification and online graph indexing procedure, where it classifies the texts and further enriches the weighted graph by adding query text information to the created graph, addressing the narrow source issue.
To begin with, we introduce how GORAG performs text classification based the LLM. 
Specifically,
given each unlabeled query text $t$, GORAG utilizes LLM to predict its class label $y^*$ by constructing an input prompt $c_t$ as follows: 
\begin{align}
	c_t=Concat(t, \mathcal{K}^t, \hat{\mathcal{Y}}^r_t, \mathcal{K}_{\hat{\mathcal{Y}}^r_t}). 
\end{align}
 $c_t$ can be considered as the text concatenation of the extracted keywords $\mathcal{K}^t$ from the text $t$, and the candidate labels $\hat{\mathcal{Y}}^r_t$ obtained by Algorithm~\ref{alg:mehlhorn} in \autoref{sec:pseduo_code}, we omit the more complex input structures as they rarely improve performance, but lead to higher costs \cite{zhou2023lima,liu2024lost}.
 Also, the $\mathcal{K}_{\hat{\mathcal{Y}}^r_t}=\{\mathcal{K}_{y_i}\}_{i=1}^{|\hat{\mathcal{Y}}^r_t|}$
 and $\mathcal{K}_{y_i}$ are the representative keywords of each label $y_i\in \hat{\mathcal{Y}}^r_t$. 
\begin{equation}
	\mathcal{K}_{y_i}=LLM(p_{gen},\mathcal{D}^r), 
\end{equation}
which were generated by LLM with the label's semantic label name if available. 
Next, with a classification instruction prompt $p_{classify}$. 
\begin{equation}
	y^*_t=LLM(p_{classify}, c_t), 
\end{equation}
the LLM would try to select the best-suited label $y^*_t \in \mathcal{Y}^r$ to annotate the text $t$. 


To fully leverage the text-extracted keywords, GORAG utilizes an online indexing mechanism to incrementally update keywords that do not yet exist in the weighted graph $\mathcal{G}^r$ at the $r$-th round to $\mathcal{G}^r$ based on the text $t$'s predicted label $y^*_t$. 
To be specific, each keyword node $v\in \mathcal{V}_{notexist}^t$ would be added to the original graph's node set $\mathcal{V}^r$ and be assigned with an edge $e_{v,y}$ connecting it with the text $t$'s predicted label $y_t^*$:  
\vspace{-3px}
\begin{equation}
	\vspace{-3px}
	\mathcal{V}^r = \mathcal{V}^r \cup \mathcal{V}_{notexist}^t,\ \mathcal{E}^r = \mathcal{E}^r \cup \mathcal{E}_{oi}^t, 
\end{equation}
where $\mathcal{E}_{oi}^t=\{e_{v,y_t^*}| v\in \mathcal{V}_{notexist}^t\}$ denotes the set of all newly assigned edges $e_{v,y}$ between keyword node $v$ and its predicted label $y_t^*$, for these newly assigned edges, their weight is calculated with the edge weighting mechanism illustrated in Equation~\eqref{eq:edge_weigting}.
The edge weights can reflect the semantic relevance of keywords to labels, and reduce the noise introduced by inaccurate LLM classification.

\vspace{-5px}
\section{Experiments}\label{sec:experiment}
\begin{table*}[t]
	\centering
	\renewcommand{\arraystretch}{1.1}
	\caption{Classification accuracy on CAD, WOS, and IFS-Rel benchmarks.
	}
	\label{tab:main_exp}
	\vspace{-10px}
	\setlength\tabcolsep{3pt}
	\resizebox{\linewidth}{!}{
		\begin{tabular}{l|l|l|ccc|ccc|ccc|ccc} 
			\hline\hline
			\multirow{2}{*}{\textbf{Dataset}} & \multirow{2}{*}{\textbf{Category}} & \multirow{2}{*}{\textbf{Model}} &
			\multicolumn{3}{c|}{\textbf{Round 1 }}                    & \multicolumn{3}{c|}{\textbf{Round 2 }}                    & \multicolumn{3}{c|}{\textbf{Round 3 }}                    & \multicolumn{3}{c}{\textbf{Round 4 }}                      \\ 
			\cline{4-15}
			&                                     &                                  & \textbf{1-shot} & \textbf{5-shot} & \textbf{10-shot} & \textbf{1-shot} & \textbf{5-shot} & \textbf{10-shot} & \textbf{1-shot} & \textbf{5-shot} & \textbf{10-shot} & \textbf{1-shot} & \textbf{5-shot} & \textbf{10-shot}  \\ 
			\hline
			\multirow{7}{*}{\textbf{CAD}}     & \multirow{1}{*}{NN-based}           & Entailment    \cite{xia-etal-2021-incremental}                   &     0.2125  &    0.2000  &    0.1208    &     0.1700      &   0.1060    &      0.1634   &   0.1177   &   0.0836      &    0.0785     &  0.0857       &   0.0700       &    0.0529       \\
			\cline{2-15}
			& \multirow{2}{*}{Long Context RAG}   & NaiveRAG      \cite{gao2023retrieval}                      &     0.1875  &     0.1375 &  0.0958      &  0.1500         &       0.1100 &     0.0766    &  0.0802    &      0.0563  &   0.0392      &   0.0600      &   0.0405       &    0.0281     \\
			
			&   & LongLLMLingua        \cite{jiang-etal-2024-longllmlingua}              &     0.1667  &   0.2417   &     0.2417   &        0.1709   &    0.2167   &     0.2200    & 0.1370     &  0.1587       &    0.1552     &   0.1092      &    0.1359      &     0.1285    \\ \cline{2-15}
			&         \multirow{4}{*}{Graph-based RAG}                              & GraphRAG       \cite{edge2024local}                  &   \underline{0.3958}    &      \underline{0.3625} &     {0.3292}   &    \underline{0.3167}       &    \underline{0.2933}   &      \underline{0.2833}   & 0.1587     &   \underline{0.1621}      &   \underline{0.1706}      &     0.1200    &   \underline{0.1236}       &        \underline{0.1322}    \\
			&                                     & LightRAG      \cite{guo2024lightrag}                     &   0.3875	    &   0.3583   &      \underline{0.3373}   &      0.3133     &     0.2867 &   0.2867      &  0.1638    &      0.1502    &      0.1571   &     0.1175    &   0.1102       &    0.1188      \\
			&                                     & HippoRAG2     \cite{gutiérrez2025ragmemory}               &    0.3667   &     0.2250 &    0.1792    &   0.2953        &    0.2000    &    0.1458    &     \underline{0.1820} &     0.1032    &    0.0746     &    \underline{0.1435}     &         0.0668 &    0.0535    \\ 
			&                                     & \textbf{GORAG}            &   \textbf{0.6125}    &   \textbf{0.6125}      &     \textbf{0.6125}       &  \textbf{ 0.5867}    &  \textbf{0.5833}     &    \textbf{0.5882}       &     \textbf{0.3993}    &   \textbf{0.4010}      &     \textbf{0.4001}        &       \textbf{0.3133}  &    \textbf{0.3146}        &        \textbf{03139}            \\ \hline\hline
			\multirow{7}{*}{\textbf{WOS}}     & \multirow{1}{*}{NN-based}           & Entailment    \cite{xia-etal-2021-incremental}                   & {0.3695}          & {0.3823}          & {0.4187}           & \underline{0.3994}          & \underline{0.4471}          & \underline{0.4222}           & \underline{0.4510}          & \underline{0.4857}          & \underline{0.4787}           & \underline{0.4030}          & \underline{0.4387}          & \underline{0.4442}            \\
			\cline{2-15}
			& \multirow{2}{*}{Long Context RAG}   & NaiveRAG      \cite{gao2023retrieval}                   & 0.3885          & {0.3904}          & 0.3897           & 0.2267          & 0.2154          & 0.2187           & 0.1821          & 0.1475          & 0.1799           & 0.1653          & 0.1556          & 0.1649            \\
			
			&   & LongLLMLingua        \cite{jiang-etal-2024-longllmlingua}            & 0.3806          &          0.3823       &       0.3901           & 0.2155          &          0.2202       &      0.2198            & 0.1770          &        0.1567        &       0.1608        & 0.1468          &         0.1382       &       0.1493            \\ \cline{2-15}
			&                 \multirow{4}{*}{Graph-based RAG}                      & GraphRAG       \cite{edge2024local}                  & 0.3852          & 0.3897          & 0.3906           & 0.2213          & 0.2197          & 0.2219           & 0.1816          & 0.1770          & 0.1786           & 0.1641          & 0.1634          & 0.1625            \\
			&                                     & LightRAG      \cite{guo2024lightrag}                   & {0.3930}          & 0.3806          & 0.3815           & 0.2202          & 0.2216          & 0.2145           & 0.1743          & 0.1767          & 0.1799           & 0.1625          & 0.1626          & 0.1632            \\
			&                                     & HippoRAG2    \cite{gutiérrez2025ragmemory}               &  \underline{0.4133}    &     \underline{0.4245}   &   \underline{0.4208}     &  0.2552    &    0.2656  &     0.2666    &    0.2205  &  0.2210  &   0.2242   &  0.1906     &   0.1860 &      0.1915    \\
			&                                     & \textbf{GORAG}                   &     \textbf{0.4866}            &        \textbf{0.4990}            &    \textbf{0.5134}              &       \textbf{0.4790}           &        \textbf{0.5109}            &        \textbf{0.5284}          &            \textbf{0.4736}      &       \textbf{0.4996}          &     \textbf{0.5234}              &          \textbf{0.4420}        &         \textbf{0.4717}         &       \textbf{0.4929}             \\ 
			\hline\hline
			\multirow{7}{*}{\textbf{IFS-Rel} } & \multirow{1}{*}{NN-based}           & Entailment          \cite{xia-etal-2021-incremental}           & 0.3391          & {0.6046}          & {0.5859}           & {0.4008}          & {0.5516}          & \underline{0.5828}           & {0.3061}          & {0.4703}          & {0.5193}           & {0.2971}          & {0.4395}          & {0.4895}            \\\cline{2-15}
			& \multirow{2}{*}{Long Context RAG}   & NaiveRAG        \cite{gao2023retrieval}               &{0.7343}          & 0.7344          & 0.7406           & 0.5394          & 0.5406          & 0.5469           & \underline{0.4719}          & 0.4719          & 0.4688           & 0.5233          & {0.4938}         & 0.5000            \\
		
			&    & LongLLMLingua       \cite{jiang-etal-2024-longllmlingua}              & \underline{0.8750}          &        \underline{0.8906}         & \underline{0.9219}           & \underline{0.5734}          &         \underline{0.5922}        & 0.5806           & 0.4703          &   \underline{0.4741}              & \underline{0.5225}           & \underline{0.5819}          &      0.4903           & \underline{0.5038}            \\	\cline{2-15}
			&             \multirow{4}{*}{Graph-based RAG}                         & GraphRAG      \cite{edge2024local}                     & 0.6406     &    0.7813   &   0.8125    &   0.4406    &     0.4688    &   0.4719     &     0.3828   &   0.3734    &     0.3141      &  0.4297   &    0.4422    &   0.3922           \\
			&                                     & LightRAG    \cite{guo2024lightrag}               & 0.7578     &    0.8281   &    0.8594    &  0.4906    &  0.5155    &    0.5156     &  0.3802    &  0.4010  &  0.4427    &   0.4531    & 0.4804   &  0.4610        \\
			&                                     & HippoRAG2    \cite{gutiérrez2025ragmemory}               & 0.7313     &     0.7375  &     0.7406   &    0.4875  &    0.5016  &    0.4938     &    0.4297  &   0.4328 & 0.4438     &     0.4984  &   \underline{0.5008} &    0.4953      \\
			&                                     & \textbf{GORAG}                                      & \textbf{0.9688}   & \textbf{0.9688}   &   \textbf{0.9688}&\textbf{0.5938}  &\textbf{0.6172} & \textbf{0.6250} & \textbf{0.4844}  & \textbf{0.4948} &\textbf{0.5344}  &\textbf{0.6165} & \textbf{0.4988}  & \textbf{0.5469}  \\
			\hline\hline
		\end{tabular}
	}
	\vspace{-10px}
\end{table*}

In this section,
we evaluate GORAG's performance on the DFSTC task.
Section~\ref{ssec:exp:setting} introduces the experimental setup, while Section~\ref{ssec:exp:main} presents results on effectiveness and efficiency. Section~\ref{ssec:exp:ablation} covers ablation studies, and Section~\ref{ssec:exp:gen} discusses generalization. 
	Due to space limitation, we put experiment details, such as hardware settings and case study in \autoref{sec:exp_detail}.
\subsection{Experiment Settings}\label{ssec:exp:setting}
 
\subsubsection{\textbf{Few-shot setting}} 
In this paper, we employ 1-shot, 5-shot, and 10-shot settings for few-shot training, where each setting corresponds to using 1, 5, and 10 labeled training samples per class, respectively. 
We omit experiments with more than 10 labeled samples per class, as for WOS dataset, there are already over 1300 labeled training data under 10-shot setting. 
 
\subsubsection{\textbf{Datasets}}
We select the CAD \cite{vidgen2021introducing}, WOS \cite{kowsari2017hdltex}, 
and IFS-Rel \cite{xia-etal-2021-incremental} benchmark to test GORAG's performance. 
The texts in CAD are social media posts that potentially have different kinds of hate speech and abuse. 
The texts in the 
IFS-Rel benchmark are segments from web news, and their classifications help detect misinformation. 
The texts from the WOS benchmark are abstracts or introductions from academic papers, as the WOS dataset contains a vast number of candidate labels and has a large word count for each text; these characteristics make it a challenging benchmark for text classification models. The statistics of these benchmarks are shown in  \autoref{tab:data_stat}.

Due to the space limitations, we mainly display our experinents on the 4 rounds split version of all datasets in \autoref{tab:main_exp}, and further experiment results of 6 and 8 rounds are shown in \autoref{fig:robust}.

\noindent\underline{\textit{\textbf{Ethical Disclaimer.}}}
This research involves sensitive content. All datasets are publicly available, and our study adheres to institutional ethical guidelines to ensure that no personally identifiable information is disclosed. Examples shown in this paper are for illustrative purposes only and do not reflect the authors' views.
\subsubsection{\textbf{Evaluation Metrics}}
In this paper, aligning with previous DFSTC models \cite{xia-etal-2021-incremental, meng2020text}, 
we use classification accuracy as the evaluation metric. 
Given that LLMs can generate arbitrary outputs that may not precisely match the provided labels, we consider a classification correct only if the LLM's output exactly matches the ground-truth label name or label number.
\subsubsection{\textbf{Baselines}}
	We compare six baselines from three types as follows.
	The details of selected baselines are in Appendix~\ref{sec:baselines}. 
	\begin{itemize}[leftmargin=*]
		\item 	 \textbf{One NN-based model: } \textit{Entailment} \cite{xia-etal-2021-incremental}, which converts the task to binary classification, and creates entailment pairs with the labeled data for training. 
		
		\item 	 \textbf{Two Long Context RAG models:  }
		\textit{NaiveRAG} \cite{gao2023retrieval} splits few-shot labeled data into chunks and retrieves the most query similar chunks to augment LLM classification.
		\textit{LongLLMLingua} \cite{jiang-etal-2024-longllmlingua} further compresses the split chunks of NaiveRAG by filtering out chunk tokens regarding their LLM generation perplexity.
		
		\item 	 \textbf{Three Graph-based RAG models: }
		\textit{GraphRAG} \cite{edge2024local} extracts entities and relations from few-shot training data, and uses LLM to summarize graph neighborhoods as communities for retrieval. 
		\textit{LightRAG} \cite{guo2024lightrag} drops the community-related procedure and solely retrieves the 1-hop neighborhood of the unlabeled text mapped entities on the graph.
		\textit{HippoRAG2} \cite{gutiérrez2025ragmemory} retrieves the Top-k candidate labels and their training texts based on their Personalized PageRank (PPR) score on the graph.
		
	\end{itemize}

Furthermore, as solely put the labeled text within the LLM context can greatly increase costs and affect LLM performance
\cite{chen2023dense,jiang-etal-2024-longllmlingua,liu2024lost}, 
we compare the performance of solely using LLM for classification with GORAG under 0-shot setting in \autoref{tab:0_shot}. 

\subsubsection{\textbf{Hyperparameter Settings}}


Following the exact setting in Entailment's original paper \cite{xia-etal-2021-incremental}, we train the Entailment model with the RoBERTa-large PLM for 5 epochs with a batch size of 16, a learning rate of $1 \times 10^{-6}$, and the Adam optimizer~\cite{kingma2014adam}.
For LongLLMLingua, we test the compression rate within ${0.75, 0.8, 0.85}$ and select 0.8, as it achieves the best overall classification accuracy.
For all RAG-based baselines, we set all their hyperparameter the same as in their original paper, if not further illustrated.
For GraphRAG and LightRAG, we use their local search mode, as it achieves the highest classification accuracy on the most complex WOS dataset. 
For GraphRAG and NaiveRAG, we use the implementation from \cite{nanographrag}, which optimizes their original code and achieves better time efficiency while not affect the performance.
For all other baselines, we use their official implementations. 

\subsection{Few-shot Experiments}\label{ssec:exp:main}
%
%
%
\begin{figure}[t]
	\hspace{-11px}
	\subfigure[\footnotesize{Time cost on WOS dataset.}]{
		\label{fig:eff_wos}
		\includegraphics[width=0.5\linewidth]{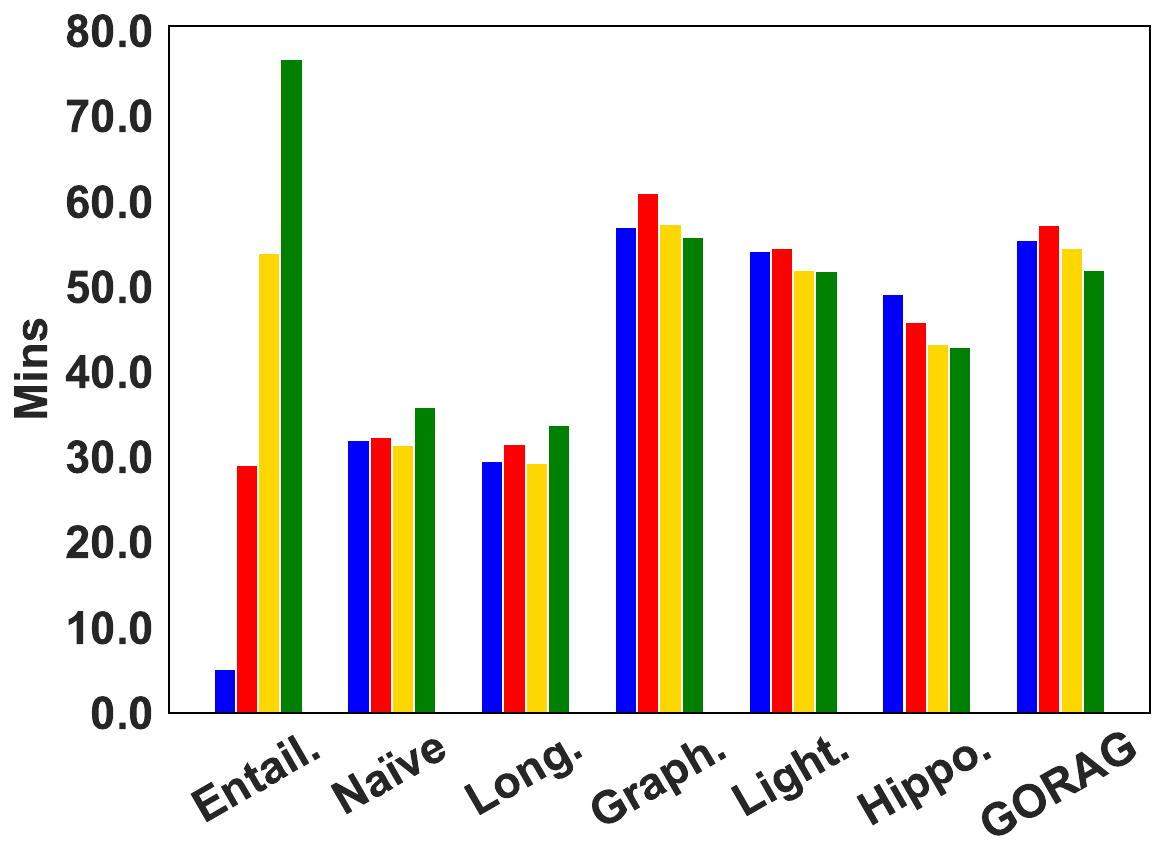}
	}
	\hspace{-10px}
	\subfigure[\footnotesize{Time cost on CAD dataset.}]{
		\label{fig:eff_cad}
		\includegraphics[width=0.5\linewidth]{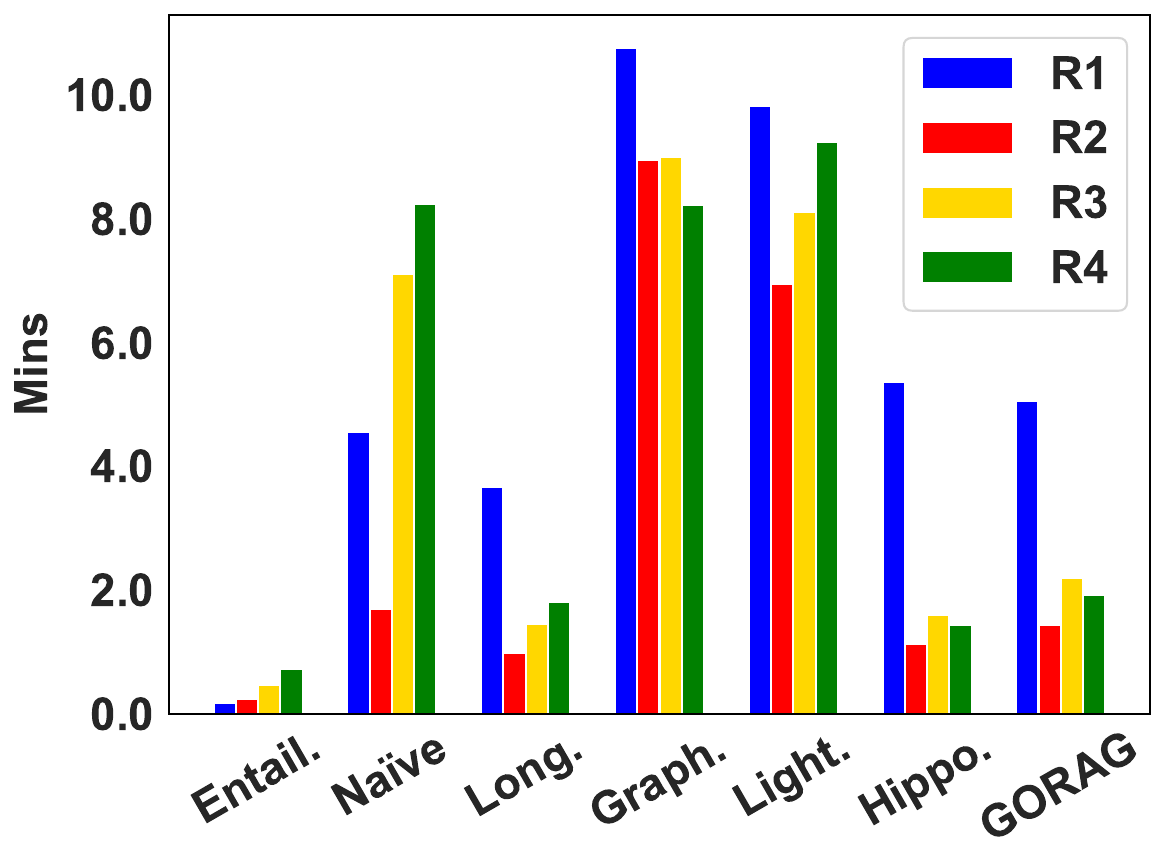}
	}
	\vspace{-10px}
	\caption{Total time cost on 2 datasets under 1-shot setting.}
	\label{fig:time_cost}
	\vspace{-10px}
\end{figure}
\subsubsection{\textbf{Effectiveness Evaluation}}\label{sec:effectiveness}
	In \autoref{tab:main_exp}, we apply Qwen2.5-7B backbone for CAD and WOS benchmark. 
	Due to cost and API risk control (failing to respond when detecting dirty words) reasons, we only use the GPT-4o backbone on the IFS-Rel news benchmark. 
As shown in \autoref{tab:main_exp}, GORAG achieves the best classification accuracy over all dynamic rounds on all benchmarks. 
Based on the experiment results, 
we have the following observations: 

\begin{itemize}[leftmargin=*]
\item Firstly, with the training data increase (R1 to R4), NN-based Entailment suffers from performance decrease on CAD datasets.
This is because the CAD datasets have an unbalanced label distribution, 
%
as noted in prior studies \cite{liu2023pre,liu2024liberating,lei2023tart}, 
NN-based models that apply data augmentation techniques are prone to overfit when the ground truth label distribution are unbalanced.



\item Secondly,
NaiveRAG performs better than Entailment in round 1 of the WOS and IFS-Rel datasets by avoiding overfitting, but its long inputs hinder its ability to adapt to dynamic label updates, leading to a significant performance drop from R1 to R4. 

%
%

\item Thirdly, 
Graph-based RAG baselines 
only slightly improve performance across datasets. They struggle with dynamic label updates due to issues like uniform indexing, threshold dependency, and narrow sources, leading to inaccurate retrievals. 


\item Lastly, 
from R1 to R4, GORAG adapts better to dynamic label updates, addresses the issues of Graph-based RAG models, and achieve better performance than all tested baselines. 

\end{itemize}
\begin{table}[t]
	\centering
	\caption{Graph incremental trends and memory usage on IFS-Rel dataset round 1 with GPT-4o LLM.}
	\label{tab:memory_cost}
	\vspace{-10px}
	\resizebox{\linewidth}{!}{%
		\begin{tabular}{l|c|c|c|c} 
			\hline\hline
			\textbf{Object} & \textbf{25\%} & \textbf{50\%} & \textbf{75\%} & \textbf{100\%}  \\ 
			\hline
			Nodes           & 1644          & 3238 (+1394)  & 4482 (+1244)  & 5299 (+817)     \\
			Edges           & 1771          & 3265 (+1494)  & 4597 (+1332)  & 5503 (+906)     \\
			Memory          & 13.3 KB       & 27.6 KB       & 37.0 KB       & 46.4 KB         \\
			\hline\hline
		\end{tabular}
	}
	\vspace{-15px}
\end{table}
\subsubsection{\textbf{Efficiency Evaluation}}\label{sec:efficiency}

To evaluate the efficiency of GORAG, we compare the total time cost of GORAG and other baselines on the 4 round split version of WOS and CAD datasets with Qwen2.5-7B backbone, the results are 
shown in Figure \autoref{fig:eff_wos} and \autoref{fig:eff_cad}. 
Although GORAG's online indexing mechanism introduces a vast amount of knowledge into the weighted graph, we can still achieve comparable or better time efficiency with other baselines. 
We also list the token usage comparison between GORAG and other baselines in \autoref{tab:token_cost}, and the graph's memory usage and incremental trends in \autoref{tab:memory_cost}, where GORAG consumes fewer tokens than current RAG baselines, and only introduce less than 50KB memory cost compared with solely applying LLMs.

\subsection{Ablation Study}\label{ssec:exp:ablation}
{\color{black}To further analyze GORAG's performance, 
we conduct an extensive ablation study under 1-shot setting with Qwen2.5-7B backbone}, as RAG models show similar performance across shot settings. Specifically, we examine how different GORAG components impact classification performance using its variants. GORAG \textit{offline} removes the online indexing mechanism. 
GORAG \textit{unit}  removes the edge weighting mechanism, every edge in this variant is assigned with weight 1. 
GORAG \textit{keyword} only uses the keyword extracted from the text to create LLM classification input, rather than the whole text. 

\begin{table}[t]
	\centering
	\caption{Token cost on IFS-Rel dataset with 1-shot setting.}
	\label{tab:token_cost}
	\vspace{-10px}
	\resizebox{\linewidth}{!}{%
		\begin{tabular}{l|c|c} 
			\hline\hline
			\textbf{Model} & \textbf{AVG Token per Query} & \textbf{AVG Cost per Query (USD)}  \\ 
			\hline
			GPT-4o         & 634.6                        & \$0.005                               \\
			LongLLMLingua  & 2771.3                       & \$0.021                               \\
			NaiveRAG       & 4537.4                       & \$0.034                               \\
			LightRAG       & 7866.3                       & \$0.059                               \\
			GraphRAG       & 9296.3                       & \$0.070                               \\
			\textbf{GORAG} & \textbf{557.5}               & \textbf{\$0.004}                      \\
			\hline\hline
		\end{tabular}
	}
	\vspace{-10px}
\end{table}
As shown in \autoref{tab:exp_abl}, firstly,
GORAG \textit{offline} achieves the worst accuracy, because it lacks a comprehensive retrieval source, making its generated candidate labels inaccurate, hence limits its performance. 
Secondly, GORAG \textit{unit} outperforms GORAG \textit{offline}, but still sub-optimal to GORAG, demonstrates the importance of modeling the varying importance between keywords and each label, which leads to accurate retrievals. 
Lastly, GORAG \textit{keywords} demonstrates the important information for text classification are mostly text keywords, as it achieves comparable or better performance than other ablated baselines. 


\begin{table}[t]
	\centering
	\caption{Ablation experiments on WOS for GORAG's variants.}
	\vspace{-10px}
	\label{tab:exp_abl}
	\resizebox{\linewidth}{!}{%
		\begin{tabular}{l|l|c|c|c|c} 
			\hline\hline
			\multirow{2}{*}{\textbf{Dataset}} & \multirow{2}{*}{\textbf{Model}} & \multicolumn{4}{c}{\textbf{Round}}                                        \\ 
			\cline{3-6}
			& & R1              & R2         & R3           & R4  \\ 
			\hline
			\multirow{5}{*}{\textbf{WOS}} 
			& GORAG \textit{offline\&unit}          &     0.2979            &     0.2287            &       0.2645          &      0.2249                        \\
			& GORAG \textit{offline}          &     0.3063            &     0.2302            &       0.2455          &      0.2156                        \\
			&  GORAG \textit{unit}             & 0.4706          & 0.4394          & 0.4407          & 0.3899                      \\
			& GORAG \textit{keyword}         & \underline{0.4746}          & \underline{0.4606}          & \underline{0.4455}          & \underline{0.4030}             \\ 
			& \textbf{GORAG}                           & \textbf{0.4862}          &  \textbf{0.4649}              &     \textbf{0.4814}            &       \textbf{0.4210}                    \\
			\hline\hline
		\end{tabular}
	}
	\vspace{-10px}
\end{table}

\subsection{More Generalization Evaluations}
\label{ssec:exp:gen}
\subsubsection{Zero-shot Experiments}
To further study GORAG's 0-shot ability, we compare GORAG's performance with the original powerful LLM GPT-4o \cite{hurst2024gpt} without RAG approaches, and some RAG baselines that are applicable to the 0-shot scenario in \autoref{tab:0_shot}. 
Due to cost constraints, we conducted experiments on the second largest IFS-Rel dataset by providing only the label names, without any labeled data to GORAG for creating the graph.
In the 0-shot setting, we input the labels and their LLM generated descriptions as system prompt for each model, 
and GORAG would extract keywords from label descriptions, without using any information from labeled texts. 
Note that the Entailment model cannot be applied in this setting, as they require labeled data for training. For other RAG-based baselines, they are considered equivalent to their base LLMs in the 0-shot setting due to a lack of external information.
According to \autoref{tab:0_shot}, GORAG can still achieve better performance than all tested baselines.


\subsubsection{Round Generalization Evaluation}\label{sec:robuestness}
	To evaluate GORAG's ability to generalize to a larger number of dynamic rounds, 
we conducted additional experiments on the 6 and 8 round split versions of the WOS dataset under 1-shot and 5-shot settings, as the WOS dataset contains more testing data and labels than CAD dataset, and more lengthy, complex texts than the IFS-Rel dataset.
In \autoref{fig:robust}, the results for rounds 1 to 4 are obtained from the 4-round split version, rounds 5 and 6 are obtained from the 6-round split version, and rounds 7 and 8 are obtained from the 8-round split version of the WOS dataset. 
This setting can ensure a sufficient amount of testing data at each round,
As illustrated in \autoref{fig:robust}, the classification accuracy of GraphRAG and NaiveRAG drops significantly from round 1 to 8, 
while GORAG maintains competitive classification accuracy as the number of rounds increases through 1 to 8, with both 1 and 5-shot setting, 
demonstrating the importance of a more precise retrieval.


\begin{table}[t]
	\centering
	\caption{0-shot experiments for GORAG and RAG baselines on IFS-Rel dataset under 0-shot setting.}
	\label{tab:0_shot}
	\vspace{-10px}
	\resizebox{\linewidth}{!}{%
		\begin{tabular}{l|c|c|c|c} 
			\hline\hline
			\textbf{Model}          & \textbf{R1}              & \textbf{R2}              & \textbf{R3}              & \textbf{R4}               \\ 
			\hline
			GPT-4o \cite{hurst2024gpt}                  & 0.9219                   & 0.5109                   & 0.4063                   & \underline{0.5936}                    \\ 
			LongLLMLingua \cite{jiang-etal-2024-longllmlingua}          & \underline{0.9281}                   & \underline{0.5297}                   & \underline{0.4234}                   & 0.5806                    \\
			LightRAG \cite{guo2024lightrag}                & 0.6718                   & 0.3047                   & 0.2135                   & 0.0977                    \\ 
			GraphRAG \cite{edge2024local}                & 0.5625                   & 0.1875                   & 0.1458                   & 0.0938                    \\
			\textbf{\textbf{GORAG}} & \textbf{\textbf{0.9328}} & \textbf{\textbf{0.5406}} & \textbf{\textbf{0.4375}} & \textbf{\textbf{0.6063}}  \\
			\hline\hline
		\end{tabular}
	}
	\vspace{-10px}
\end{table}
\begin{figure}[t]
	\centering
	\subfigure[1-shot setting.]{
		\label{fig:robust_1shot}
		\includegraphics[width=0.49\linewidth]{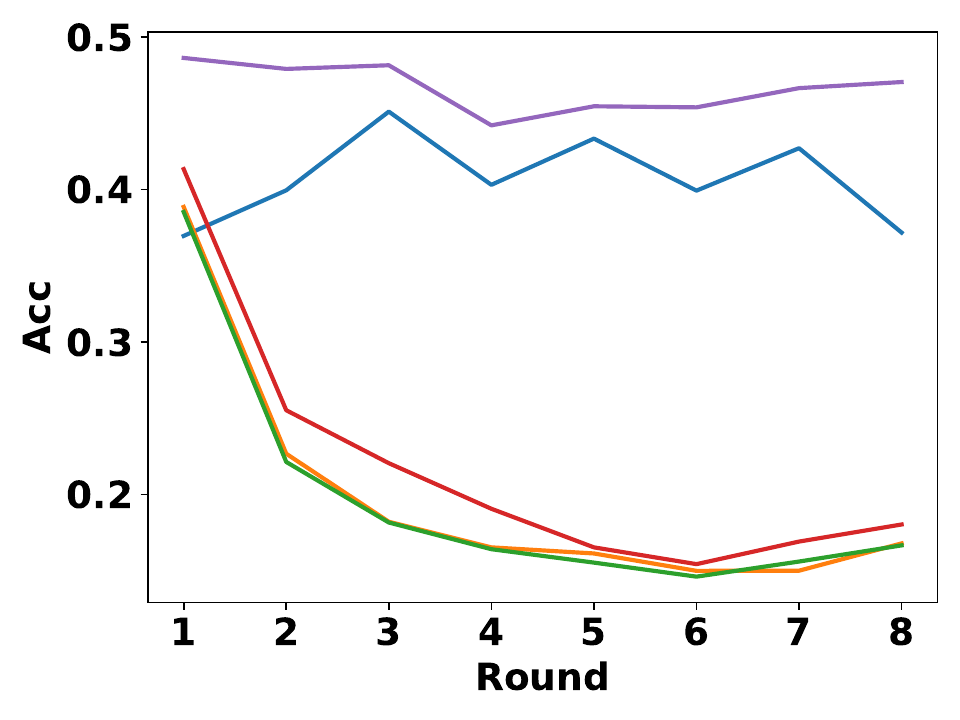}
	}
	\hspace{-9px}
	\subfigure[{5-shot setting.}]{
		\label{fig:robust_5shot}
		\includegraphics[width=0.49\linewidth]{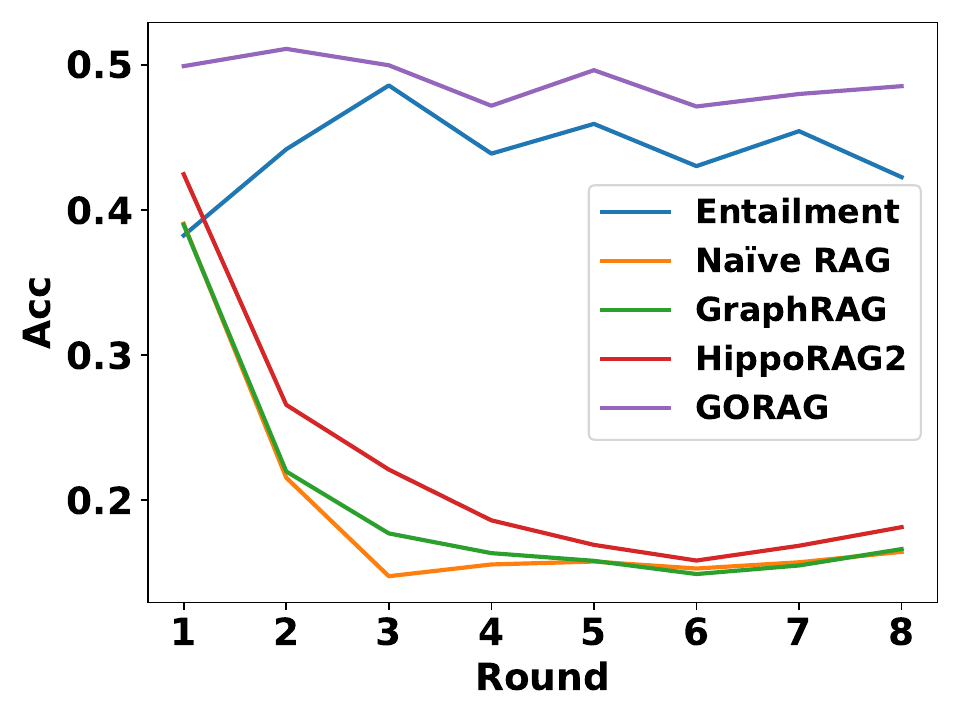}
	}
	\vspace{-15px}
	\caption{Experiment result of WOS dataset under 1-shot and 5-shot settings for at most 8 rounds.}
	\label{fig:robust}
	\vspace{-10px}
\end{figure}


 \vspace{-5px}
\section{Conclusion}\label{sec:conclusion}
In this paper, we propose GORAG, a Graph-based Online Retrieval Augmented Generation framework for the Dynamic Few-shot Social media Text Classification (DFSTC) task. 
Extensive experiments on hate speech and misinformation detection benchmarks with different characteristics demonstrate the effectiveness of GORAG 
in classifying social media texts with only a limited number or even no labeled data. 
GORAG shows its effectiveness in adapting to the dynamic updates of target labels by adaptively retrieving candidate labels to filter the large target label set in each update round. 
For future work, we aim to further enhance GORAG's performance and explore its application in more general scenarios.
\section{Acknowledgements}
Lei Chen’s work is partially supported by National Key Research and Development Program of China Grant No. 2023YFF0725100, National Science Foundation of China (NSFC) under Grant No. U22B2060, Guangdong-Hong Kong Technology Innovation Joint Funding Scheme Project No. 2024A0505040012, AOE Project AoE/E-603/18, Theme-based project TRS T41-603/20R, CRF Project C2004-21G, Key Areas Special Project of Guangdong Provincial Universities 2024ZDZX1006,  Guangdong Province Science and Technology Plan Project 2023A0505030011, Guangzhou municipality big data intelligence key lab, 2023A03J0012, HKUST(GZ) - CMCC(Guangzhou Branch) Metaverse Joint Innovation Lab under Grant No. P00659, Hong Kong ITC ITF grant PRP/004/22FX, Zhujiang scholar program 2021JC02X170, HKUST-Webank joint research lab.
\clearpage

\bibliographystyle{ACM-Reference-Format}
\balance
\bibliography{GraphRAG_FSCIL}
\clearpage
\appendix
\section{Notations}
\label{sec:notations}

Due to the space limitations, we put the summary table on important notations here in \autoref{tab:notation}. 

\begin{table}[h]
	\centering
	\caption{Summary on Important Notations.}
	\label{tab:notation}
	\setlength\tabcolsep{4pt}
	\resizebox{\linewidth}{!}{
		\begin{tabular}{l|p{7cm}}
			\hline\hline
			
			\textbf{Notations}   & \textbf{Meanings}  \\ 
			\hline
			
			$t$ & Text. \\  \hline
			
			$w$ & The edge weight.  \\ 		\hline
			
			$r$ & The round of label updating.   \\ \hline
			
			$\tau$ & The keyword extraction threshold.   \\ \hline
			
			$y \in \mathcal{Y}^r$ & The target label at round $r$.  \\ \hline
			
			$k$  &  The number of labeled data provided per target label. \\ \hline

			$f^r_\theta$ & The function that learn all target labels' score for texts. \\ 		\hline
			
			$\mathbf{s}^r$ &  The score for all target labels at round $r$. \\ \hline
			
			$y^{r*}$ &   	The predicted label for the text at round $r$.\\ 	\hline
			
			$\mathcal{G}^r(\mathcal{V},\mathcal{E})$ & weighted graph with node set  $\mathcal{V}$ and edge set $\mathcal{E}. $ \\ \hline
			
			$\mathcal{Y}^r$ &  New target labels at round $r$. \\ \hline
			
			
			$\mathcal{D}^r$ & The labeled training text for all new labels at round $r$.  \\ \hline
			
			
			$	\mathcal{T}^r$ &	Unlabeled texts at round $r$. \\ 		\hline		
			
			$	\mathcal{T}_{test}^r$ & The query text for at round $r$.	\\ 		\hline	
			
			$	\mathcal{K}^r$ &	The extracted keyword set for new labels of round $r$.	\\ \hline

			$	\mathcal{V}_{new}^r$ & The set of new graph nodes to be added in round $r$.	\\ 		\hline					
			$\mathcal{N}(\cdot)$& The neighbor set of a node in the weighted graph.	\\ 		\hline		
			
			$p_{extract}$&	The extraction instruction prompt.	\\ \hline					
			$p_{gen}$& The generation instruction prompt. 	\\ 	\hline		
			$p_{classify}$&The classification instruction prompt. 	\\ 	\hline		
			
			$CS(\cdot)$ &  	The correlation score between keywords and labels. \\ 		\hline			
			
			$w^r_{v,y}$&  The weight of edge $e_{v,y}$ in round $r$.	\\ 		\hline			
			$P_{u,v}$&  The shortest path between node $u$ and node $v$.	\\ 		\hline			
			$\mathcal{V}^t_{exist}$& The keywords extracted from $t$ and exist in graph.	\\ 		\hline		
			
			$\mathcal{V}^t_{notexist}$ & The keywords extracted from $t$ and not exist in graph.	\\ 		\hline

			$\mathcal{E}_{oi}^t$ &  Edges added based on text $t$. \\ 
			\hline\hline
		\end{tabular}
	}
\end{table}

\section{Proof}
\label{sec:proof}
Here we prove our Adaptive Candidate Label Generation problem can be reduced from the NP-hard Steiner Tree problem \cite{Su2020}.
\begin{theorem}
	The Adaptive Candidate Label Generation problem is NP-hard. 
\end{theorem}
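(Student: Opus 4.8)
The plan is to establish NP-hardness by a polynomial-time reduction from the decision version of the classical Steiner Tree problem, which is NP-complete (one of Karp's original NP-complete problems, referenced as \cite{Su2020} in the excerpt). Recall that an instance of the Steiner Tree problem consists of an undirected weighted connected graph $G=(V,E,w)$ with nonnegative edge weights, a terminal set $T\subseteq V$, and a bound $B$; the question is whether there exists a connected subgraph containing every terminal in $T$ whose total edge weight is at most $B$.

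Given such a Steiner Tree instance, I would build an instance of the Adaptive Candidate Type Generation problem as follows. Take $\mathcal{G}^r := G$ with the same vertices, edges, and weights; set the keyword set $\mathcal{V}_{exist}^t := T$, so that the terminals play the role of the extracted keyword nodes; and designate the label set $\mathcal{Y}^r$ to be any fixed (possibly empty or singleton) subset of $V$, since the label nodes play no role in the quantity being minimized. This transformation is clearly computable in linear time, and it preserves connectivity as required by the problem definition.

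The correctness argument rests on the observation that the optimization in the problem definition, namely minimizing $\sum_{e^r_{u,v}\in\mathcal{E}^r_t} w^r_{u,v}$ over subgraphs $\mathcal{G}^r_t$ subject to $\mathcal{V}_{exist}^t\subseteq \mathcal{V}^r_t$, is exactly the task of finding a minimum-weight connected subgraph spanning the keyword nodes. Because the weights are nonnegative, any optimal such subgraph may be pruned to a tree without increasing its cost, so its optimal value coincides with the weight of a minimum Steiner tree for the terminals $T=\mathcal{V}_{exist}^t$. Hence a feasible subgraph of weight at most $B$ exists in the constructed instance if and only if the original Steiner Tree instance is a yes-instance. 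Since the label-extraction step merely reads off $\mathcal{Y}^r_t=\mathcal{V}^r_t\cap\mathcal{Y}^r$ from the optimal subgraph and does not alter the minimization, any algorithm solving the Adaptive Candidate Type Generation problem also decides the Steiner Tree problem, establishing NP-hardness.

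I expect the main obstacle to be not the reduction itself, which is essentially an identity embedding, but arguing cleanly that the auxiliary label machinery ($\mathcal{Y}^r$ and the read-off of $\mathcal{Y}^r_t$) does not affect the hardness: one must make explicit that the feasibility constraint involves only the keyword nodes and that the objective is independent of which nodes are labels, so that the embedded Steiner Tree instance is solved exactly by any solver for our problem. A secondary point worth stating carefully is the tree-pruning step, which relies on nonnegativity of the weights; since the abstract problem accepts arbitrary weighted graphs, I restrict the reduction to nonnegative-weight instances, which is sufficient to conclude NP-hardness.
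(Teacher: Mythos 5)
Your proposal is correct and takes essentially the same route as the paper: both establish NP-hardness by reducing the Steiner Tree problem to the Adaptive Candidate Type (Label) Generation problem, with the extracted keyword nodes $\mathcal{V}_{exist}^t$ playing the role of the terminal set and the label set $\mathcal{Y}^r$ being irrelevant to the minimization. Your write-up is in fact the more careful of the two---the paper only sketches a bidirectional correspondence between solutions of the two problems, whereas you make explicit the decision-version budget, the tree-pruning step under nonnegative weights, and the inertness of the label machinery.
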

\begin{proof}
		To demonstrate that the Adaptive Candidate Label Generation problem is NP-hard, we provide a simple reduction of our problem from the Steiner Tree problem. Since the Steiner Tree problem is already proven to be NP-hard \cite{Su2020}, we show that there is a solution for the Steiner Tree problem if and only if there is a solution for our problem. Firstly, given a solution $S$ for our problem, we can construct a Steiner Tree by generating a minimum spanning tree for all nodes in $S$ on graph $\mathcal{G}^r$ then connecting all nodes from $\mathcal{V}_{exist}^t$ to their closest neighbor nodes in $S$. Secondly, any Steiner Tree $ST$ that contains any node $y\in\mathcal{V}^r_t$ is also a solution of our problem with $|\{y\in\mathcal{V}^r_t\}|$ nodes.
		Thus, we prove that the adaptive candidate type generation problem is NP-hard. 
\end{proof}

\section{Pseudo Code of GORAG}
\label{sec:pseduo_code}
In this section, we provide the pseudo code of GORAG's graph construction and adaptive candidate type generation algorithm in \autoref{alg:graph_construction} and \autoref{alg:mehlhorn}, respectively. 

\section{Complexity Analysis}
\label{sec:complexity}
Firstly, we analyze the time and space complexity of GORAG's graph construction. For the text keyword extraction procedure, the time complexity would be $O(|\mathcal{D}^r|)$; If label names are available, the time complexity of generating label descriptions and extract label keywords would be $O(|\mathcal{Y}^r|)$; Calculating the TFIDF and indexing edges to graph would require $O(|\mathcal{V}_{new}^r||\mathcal{D}^r|)$ times, and merging graph requires $O(|\mathcal{Y}^r||\mathcal{Y}^{r-1}|)$. Hence, the total time complexity of GORAG's graph construction at the $r$-th round would be: 

\begin{center}
	$O(|\mathcal{Y}^r||\mathcal{Y}^{r-1}|+|\mathcal{V}_{new}^r||\mathcal{D}^r|)$. 
\end{center}

For the space complexity, the graph is stored with the weighted adjacency matrix, hence needs $O(|\mathcal{E}^r|)$ space; Storing the training corpus at the $r$-th round would need $O(|\mathcal{D}^r|)$ space; 
Storing the representative keywords $\mathcal{K}_{\hat{\mathcal{Y}_t^r}}$ would need $O(|\mathcal{K}_{\hat{\mathcal{Y}_t^r}}|)$ space. Hence the total space complexity of GORAG's graph construction at the $r$-th round would be: 

\begin{center}
	$O(|\mathcal{E}^r|+|\mathcal{D}^r|+|\mathcal{K}_{\hat{\mathcal{Y}_t^r}}|)$. 
\end{center}


The time complexity of GORAG's adaptive candidate type generation algorithm is the same with the Mehlhorn algorithm, which is $O(|\mathcal{E}^r|+|\mathcal{V}^r|log|\mathcal{V}^r|)$ \cite{MEHLHORN1988125}; The time complexity of the online indexing mechanism would cost $|\mathcal{V}_{notexist}^r||\mathcal{T}_{test}^r|$; 
Hence, the total time complexity of GORAG's adaptive retrieval can be denoted as
\begin{center} $O(|\mathcal{T}_{test}^r|(|\mathcal{E}^r|+|\mathcal{V}_{notexist}^r||\mathcal{T}_{test}^r|+|\mathcal{V}^r|log|\mathcal{V}^r|)).$
\end{center}

For the space complexity, $O(|\mathcal{E}^r|)$ to store the graph, and $O(|\mathcal{T}_{test}^r|)$ space is needed to store the testing corpus. Hence, the total space complexity of GORAG's Retrieval and Classification procedure is 
\begin{center}
	$O(|\mathcal{E}^r|+|\mathcal{T}_{test}^r|).$
\end{center}

\begin{algorithm}[t]
	\caption{Graph Construction Algorithm of GORAG at round $r$}\label{alg:graph_construction}
	\begin{algorithmic}[1]
		\REQUIRE $\mathcal{D}^r$: Training text set at the $r$-th round and the provided label $y_i \in \mathcal{Y}^r$ for each training text $t_j$; 
		
		$p_{extraction},\ p_{gen}$: The extraction and description generation instruction prompt; 
		
		$\mathcal{G}^{r-1}$: Weighted graph of the previous round $r-1$.
		\ENSURE $\mathcal{G}^r$: Constructed weighted graph at the $r$-th round.
		\STATE Let $\mathcal{V}_{text}^r=\emptyset$, $\mathcal{V}_{label}^r=\emptyset$, $\mathcal{E}_{new}^r=\emptyset$
		\FOR{each text $t\in \mathcal{T}^r$}
		\STATE $\mathcal{V}_{text}^r=\mathcal{V}_{text}^r\cup KeywordExtract(t) \cup y_t$
		\ENDFOR
		\IF{label names are available}
		\FOR{each label $y_i\in \mathcal{Y}^r$}
		\STATE Get some label related keywords $\mathcal{V}_{label}^r=LLM(p_{gen},\mathcal{D}^r)\cup \mathcal{V}_{label}^r$.
		\ENDFOR
		\ENDIF
		\STATE Let $\mathcal{V}_{new}^r=\mathcal{V}_{text}^r \cup \mathcal{Y}^r_{new} \cup \mathcal{V}_{label}^r$ be the new node set at the $r$-th round.
		\FOR{each node $v\in\mathcal{V}_{new}^r$}
		\STATE $\mathcal{E}_{new}^r=\mathcal{E}_{new}^r\cup e_{v,y_i}$
		\STATE $\mathcal{W}_{new}^r=\mathcal{W}_{new}^r\cup w_{v,y_i}^r$
		\ENDFOR
		\STATE Let $\mathcal{G}^r_{new}=(\mathcal{V}_{new}^r,\mathcal{E}_{new}^r)$ be the newly constructed graph for at the $r$-th round.
		\STATE Let $\mathcal{G}^r=Merge(\mathcal{G}^r_{new}, \mathcal{G}^{r-1})$ be the final constructed graph at the $r$-th round.
		\RETURN  $\mathcal{G}^r$
	\end{algorithmic}
\end{algorithm}

\section{Experiment Details}\label{sec:exp_detail}
\subsection{Introduction of chosen baselines}
\label{sec:baselines}
	As shown in \autoref{tab:main_exp},  we compare GORAG's performance with 7 baselines from 3 technical categories for few-shot experiments, they are NN-based Entailment \cite{xia-etal-2021-incremental}, Long Context-based NaiveRAG \cite{gao2023retrieval}, LongLLMLingua \cite{jiang-etal-2024-longllmlingua}, and Graph-based  GraphRAG \cite{edge2024local}, LightRAG \cite{guo2024lightrag}, and HippoRAG2 \cite{gutiérrez2025ragmemory}.

	\noindent \textbf{NN-based Models}
	\begin{itemize}[leftmargin=10pt]
		\item \textbf{Entailment} \cite{xia-etal-2021-incremental}: Entailment concatenates the text data with each of the label names to form multiple entailment pairs with one text sample, hence increasing the number of training data and enhance its finetuning of a RoBERTa PLM \cite{liu2019roberta}. 
		
	\end{itemize} 
	\noindent \textbf{Long Context RAG Models}
	\begin{itemize}[leftmargin=10pt]
		\item \textbf{NaiveRAG} \cite{gao2023retrieval}: It acts as a foundational baseline of current RAG models. When indexing, it stores text segments in a vector database using text embeddings. When querying, retrieves the side information based on their vector similarity to the query.
		
		\item \textbf{LongLLMLingua} \cite{jiang-etal-2024-longllmlingua}: LongLLMLingua is a instruction aware prompt compressor model, it applies LLM's generation perplexity to filter out un-important tokens of the model input based on the retrieved side information and the task instruction. 
		
	\end{itemize} 
	\noindent \textbf{Graph-based RAG Models}
	\begin{itemize}[leftmargin=10pt]
		\item \textbf{GraphRAG} \cite{edge2024local}: It employs LLM for graph construction. When retrieval, it aggregates nodes into communities w.r.t. the query, and generates community reports to encapsulate global information from texts.
		
		\item \textbf{LightRAG} \cite{guo2024lightrag}: 
		It calculates the vector similarity between query extracted entities and graph nodes, achieving a one-to-one mapping from entities to graph nodes, then retrieve these nodes' graph neighborhood.
		
		
		
		\item \textbf{HippoRAG2} \cite{gutiérrez2025ragmemory}: 
		It employs LLM for graph construction, and also add the text chunks as nodes in the graph.  
		As a result, it directly retrieves Top-k text chunks based on their respective nodes' PPR score.
	\end{itemize} 

	\begin{algorithm}[t!]
	\caption{Adaptive Candidate Type Generation Algorithm}
	\label{alg:mehlhorn}
	\begin{algorithmic}[1]
		\REQUIRE $\mathcal{G}^r (\mathcal{V}^r, \mathcal{E}^r,\mathcal{W}^r)$: The constructed weighted graph;
		
		$\mathcal{V}_{exist}^t \subseteq \mathcal{V}^r$: A set of keyword nodes extracted from text $t$ and can be mapped to graph $\mathcal{G}$; 
		
		$\mathcal{Y}^r$: The target label set at the $r$-th round.
		\ENSURE $\hat{\mathcal{Y}}^r_t$: The candidate type retrieved for text $t$.
		\STATE Compute a minimum spanning tree $MST$ of the graph $\mathcal{G}$.
		\STATE Let $\mathcal{V}_{term} = \mathcal{V}_{exist}^t \cap  \mathcal{V}(MST)$ be the terminal nodes in $MST$
		\STATE Construct a weighted auxiliary graph $H$: $ \mathcal{V}(H) = \mathcal{V}_{term}$
		\FOR{each pair of terminals $u, v \in \mathcal{V}_{term}$}
		\STATE Find the shortest path $P_{uv}$ in $MST$ from $u$ to $v$
		\STATE Let $w^H_{u, v} = \sum_{e \in P_{uv}} w_e$
		\ENDFOR
		\STATE Compute a minimum spanning tree $MST'$ of the auxiliary graph $H$
		\STATE Let $ST = \emptyset$
		\FOR{each edge $e_{u, v} \in MST'$}
		\STATE Add the shortest path $P_{uv}$ in $MST$ to $ST$
		\ENDFOR
		\STATE Let $\hat{\mathcal{Y}}^r_t =ST \cap \mathcal{Y}^r$
		\RETURN $\hat{\mathcal{Y}}^r_t$
	\end{algorithmic}
\end{algorithm}
\subsection{Hardware Settings}
All experiments are conducted on an Intel(R) Xeon(R) Gold 5220R @ 2.20GHz CPU and a single NVIDIA A100-SXM4-40GB GPU.
\subsection{Case Study}
\label{ssec:exp:case}

In this section, we dig into a test case from the CAD social media abuse detection dataset to demonstrate the ability of GORAG to provide graph-grounded clues, to explain its classification result.

As shown in \autoref{fig:case_study}, the keyword entity \textcolor{red}{feminists} and \textcolor{red}{sexual predators} are extracted from the input unlabeled text and mapped to the graph. 
GORAG then generates a Minimum Cost Steiner Tree through these mapped entities, which also spans through the candidate label \textbf{Insult}. 
The structure and involved nodes of this generated Steiner Tree (edges in \textcolor{red}{red}) serve as an explicit clue, helping the LLM to classify the text correctly as Insulting Content.

\begin{figure}[t]
	\centering
	\includegraphics[width=0.9\linewidth]{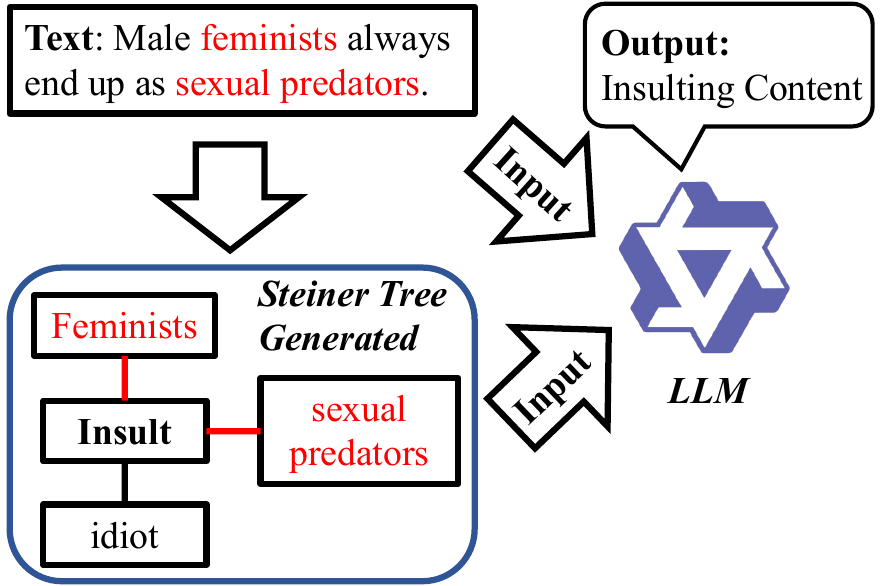}
	\caption{A case from the CAD dataset, where the generated Steiner Tree serves as an explicit clue, helping the LLM to classify the text correctly.}
	\label{fig:case_study}
	\vspace{-10px}
\end{figure}
\end{document}